\documentclass[accepted]{uai2026} 
                        
\usepackage[american]{babel}
\usepackage{microtype}
\usepackage{graphicx}
\usepackage{wrapfig}
\usepackage{subcaption}
\usepackage{booktabs} 

\usepackage{natbib} 
\usepackage{mathtools} 
\usepackage{booktabs} 
\usepackage{tikz} 
\usepackage[ruled,vlined]{algorithm2e}

\usepackage{amsmath}
\usepackage{amssymb}
\usepackage{mathtools}
\usepackage{amsthm}
\usepackage{cuted}
\usepackage{hyperref}
\hypersetup{hidelinks}
\usepackage[nameinlink,capitalise]{cleveref}

\definecolor{linkblue}{RGB}{52,120,184}

\theoremstyle{plain}
\newtheorem{theorem}{Theorem}[section]
\newtheorem{proposition}{Proposition}
\newtheorem{lemma}[theorem]{Lemma}

\theoremstyle{definition}

\theoremstyle{remark}

\title{Robust Bayesian Decision Making under Adversarial Uncertainty}

\author[1]{\href{mailto:<haripriya.harikumar@manchester.ac.uk>?Subject=Your UAI 2026 paper}{Haripriya Harikumar}{}}
\author[2,3]{Sammie Katt}
\author[1]{Yasir Zubayr Barlas}
\author[1,2,3]{Samuel Kaski}

\affil[1]{%
    Department of Computer Science\\ The University of Manchester, UK
}
\affil[2]{%
    ELLIS Institute Finland
}
\affil[3]{%
    Department of Computer Science\\
    Aalto University, Espoo\\
    Finland
  }
  
\begin{document}
\maketitle

\begin{abstract}
Scientific experiments are often designed to maximize information gain, yet in many applications the primary objective is to support reliable downstream decision-making. Existing decision-aware experimental design and active learning methods typically assume well-specified outcome models and implicitly rely on the stability of the optimal decision under real-world perturbations. In practice, however, experimental outcomes are frequently influenced by hidden or weakly modeled effects, which can substantially alter decision optimality and lead to misleading conclusions. We study sequential adversarially robust decision-aware experimental design, where data acquisition has to take into account information gain against plausible worst-case unexpected effects, modeled here as variation in adversarial variables. Building on Bayesian decision theory, we formalize an adversarially robust optimal decision under this setting and derive a principled Bayesian experimental design criterion. The criterion explicitly targets decision stability rather than nominal optimality. Experiments on synthetic and real-world scientific datasets show that conventional decision-aware design can converge rapidly to high confidence yet fragile decisions, while our robustness-aware approach yields decisions that are significantly more stable and reliable under adversarial variation.
\end{abstract}

\section{Introduction}
Science advances through controlled experimentation \citep{fisher1935design,cheng2005bayesian,melendez2021designing}: experiments are designed by manipulating a set of controllable variables, observing outcomes, and drawing conclusions about an underlying process. Classical experimental design \citep{chaloner1995bayesian,ryan2007modern} methods aim to test hypotheses or maximize information gain. In practice, however, experimentation is costly, resources are limited, and not all relevant variables can be identified, measured, or controlled. As a result, experimental outcomes are often influenced by weakly modeled adversarial factors that distort outcomes and degrade the reliability of downstream decisions \citep{corbett2017algorithmic,lacoste2011approximate}.
\begin{figure}
    \centering
    \includegraphics[width=1.0\linewidth]{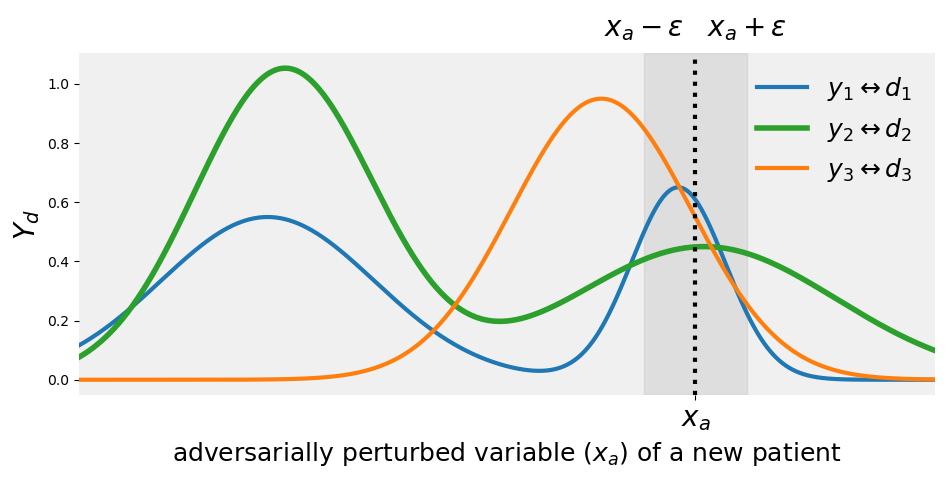}
    \caption{Three treatment plans (in green, orange, and blue curves) and their corresponding outcomes on the y-axis. A clinician must select a treatment for a new patient (indicated by the black dashed line). The x-axis denotes adversarially perturbed variable associated with patients. Although the blue treatment appears optimal (highest outcome) for the new patient, the smoother nominal utility (green) leads to decisions that remain stable across the perturbation region (shaded grey area denoted as $x_a-\epsilon$ and $x_a+\epsilon$ ), motivating adversarially robustness-aware decision learning.}
    \label{fig:intuition}
\end{figure}
This challenge is particularly significant in high-dimensional settings, where some potential adversarial variables may systematically affect outcomes in ways that are difficult to model probabilistically. Such factors may be external, latent, partially observable, or implicitly coupled to the experimental process \citep{grunwald2017inconsistency,rainforth2024modern}, and their influence often becomes apparent only after decisions are made. Consequently, experimental designs that are optimal for parameter inference or predictive accuracy can lead to fragile or misleading decision-making when these influences are present. This raises a fundamental question: how can experiments be designed to support decisions that remain reliable under variations in adversarial variables?

The same abstract problem occurs in decision-making in personalized medicine, for instance, when a new patient arrives, the doctor must decide on the most appropriate treatment strategy. However, the doctor does not know beforehand, how a particular patient responds to a particular treatment. An interesting example of experimental design to gather indirect evidence is from active learning for personalized medicine \citep{bica2021real,sundin2019active}. The algorithm serves as an information acquisition policy that assists the doctor in determining which additional patient data from an existing unlabeled patient set (outcome of the treatment is not known) would be the most informative to improve treatment decisions. Once the doctor or medical system retrieves the corresponding medical outcome (e.g. contact the physician who are treating them), the algorithm updates its predictive model of the response to treatment. The final decision lies with the physician, who integrates algorithmic recommendations with clinical expertise, ethical considerations, and patient specific constraints. However, while such decisions based on active learning may be optimal under nominal conditions, they can lack robustness to unexpected variations during treatment. This becomes particularly important in the presence of adversarial variables - factors that can significantly influence outcomes if not properly modeled. Therefore, a truly robust and reliable treatment decision must explicitly consider variations against the plausible worst-case unexpected effects in these adversarial variables to ensure stability in medical outcomes (shown in \cref{fig:intuition}). 

Motivated by this, we view scientific experimentation as a process that must account for uncertainty not only in model parameters, but also in the stability of decisions under adversarial variable variation. Rather than focusing solely on identifying parameters or predictions that are optimal under nominal assumptions, we argue that experimental design should explicitly target robust decision quality, that is, decisions whose optimality persists under plausible worst-case effects of unmodeled factors \citep{krause2007selecting}.

Building on Bayesian decision theory \cite{berger2013statistical}, we develop an adversarially robust, decision-aware experimental design framework that explicitly optimizes data acquisition for downstream decision reliability. Our approach differs from standard Bayesian optimal experimental design by shifting the focus from parameter inference to adversarially robust decision utility, thereby aligning experimental design with the ultimate goal of making stable and reliable decisions in complex, real-world systems.

\section{Background}
\subsection{Bayesian Optimal Experimental Design}
Bayesian optimal experimental design (BOED) \citep{lindley1956measure,rainforth2024modern} is a framework for selecting optimal designs $\xi \in \Xi$ to acquire data $y$ to obtain information about a parameter of interest $\theta$. This is done by selecting designs that maximize the expected reduction in entropy from the prior $p(\theta)$ to the posterior $p(\theta\mid \xi, y)$. In other words, we maximize the expected information gain (EIG)
\begin{equation} \label{eq:1}
    \text{EIG}(\xi) = \mathbb{E}_{p(y \mid \xi)} [H[p(\theta)]-H[p(\theta\mid \xi, y)]], 
\end{equation}
where $H[\cdot]$ denotes the Shannon entropy \citep{shannon1948mathematical}, $H[p(\cdot)]=-\mathop{\mathbb{E}}_{p(\cdot)}[\log p(\cdot)]$. The optimal design is $\xi^*=\text{argmax}_{\xi \in \Xi} \text{EIG}(\xi)$. As proposed by \cite{houlsby2011bayesian}, \cref{eq:1} can re-written in the form of entropies on the outcome space $y$ as follows
\begin{equation} \label{eq:1a}
    \text{EIG}(\xi) = \mathbb{E}_{p(\theta \mid D)} [H[p(y\mid \xi)]-H[p(y\mid \xi, \theta)]], 
\end{equation}
where $D=\{(\xi_i,y_i)\}_{1:N}$ contains the $N$ design-observation pairs obtained thus far.
\subsection{Bayesian Decision Theory}
Bayesian decision theory \citep{berger2013statistical} is effective for decision-making under uncertainty. It has a decision utility function denoted as $U(\theta,d)$, where $\theta$ is the state of the system when making a decision $d \in \mathcal{S}$. This utility function quantifies the usefulness of taking the decision $d \in \mathcal{S}$ when the system is in state $\theta$. The optimal decision is obtained by maximizing the expected utility
\begin{equation}
    d^* = \arg\max_{d \in \mathcal{S}} \mathbb{E}_{p(\theta \mid D)} [U(\theta,d)].
\end{equation}
In many applications, decisions influence downstream outcomes $y$, making it more natural to define utility over $y$ than $\theta$ \citep{kusmierczyk2019variational,vadera2021post}. In such cases, we write the utility as $U(y_\Xi,d)$ defined over all possible predicted outcomes $y_\Xi$ when making a decision $d \in \mathcal{S}$. As a result, the optimal decision can be expressed in terms of the predicted outcomes by
\begin{equation}
    d^* = \arg\max_{d \in \mathcal{S}} \mathbb{E}_{p(y_\Xi\mid D)} [U(y_\Xi,d)],
\end{equation}
where $p(y_\Xi\mid D) = \{p(y\mid \xi, D)\}_{\xi \in \Xi}$, a joint predictive distribution of outcomes over all possible designs given the current information in $D$ and $p(y\mid \xi, D)= \mathop{\mathbb{E}}_{p(\theta \mid D)}[ p(y\mid \xi, \theta,D)]$ is the predictive distribution \citep{huang2024amortized}.
\subsection{Bayesian Active Learning}
In a resource-constrained setup, active learning \citep{mccallum1998employing,dasgupta2004analysis,golovin2011adaptive,settles2012active} looks for samples that help the decision making process with minimal sampling. These samples are an unlabeled pool of data and the decision-maker would observe the label that would eventually help them make an optimal decision. Consider a training dataset $D=\{(\bold{\xi}_i,d_i,y_i)\}_{1:N}$ and an unlabeled query pool $D_{q}=\{(\bold{\xi}_i,d_i)\}_{1:N}$. Starting from the EIG in \cref{eq:1a}, one queries from $D_{q}$ by
\begin{equation}\label{eq:2}
\begin{aligned}
   \text{PEIG}(\xi^*,d^*) = \arg\max_{(\xi_j,d_j)\in D_q}(H[p(y_j \mid (\xi_j,d_j),D)]\\
-  \mathbb{E}_{p(\theta \mid D)}[  
    H[p(y_j \mid (\xi_j,d_j),\theta)]]).
\end{aligned}
\end{equation}
\cref{eq:2} can be further refined by considering a new design $\tilde{\xi}$, for which the decision-maker must make a decision by querying the unlabeled query pool, as follows,
\begin{equation}\label{eq:3}
\begin{aligned}
   \text{TEIG}(\xi^*,d^*) = \arg\max_{(\xi_j,d_j)\in D_q}(H[p(\tilde{y} \mid \ \tilde{\xi}, D)]\\
-  \mathbb{E}_{p(y_j \mid (\xi_j, d_j),D)} [ 
    H[p(\tilde{y} \mid \tilde{\xi},D\cup \{(\xi_j,d_j,y_j)\})]]).
\end{aligned}
\end{equation}
Here, $\xi_j$ is from the query pool $D_q$, and $y_j$ is computed from the predictive model trained with $D$.

\citet{filstroff2024targeted} characterize the optimal decision into a quantity of interest, which is denoted as $\mathcal{D}_{\text{best}}(\tilde \xi)$, defined as a discrete random variable with probability mass function $\{ \pi_d \}_{d=1}^{|\mathcal{S}|}$, where $\pi_d$ is the posterior probability that $d$ is the optimal decision for a new design $\tilde \xi$. We define a utility function $U_d(\xi,d) \rightarrow \mathbb{R}$, or simply $U_d(\xi) \rightarrow \mathbb{R}$, which quantifies the quality of taking a decision $d$ when the design is $\xi$. So, $\pi_d$ for $\tilde \xi$ can be defined as follows,
\begin{equation}\label{pi_deig}
    \pi_d = \mathbb{P}(d=\arg\max_{d' \in \mathcal{S}} U_{d'}(\tilde\xi_j,d')).
\end{equation}
So, for a new design $\tilde \xi$, \cref{eq:3} can be refined by including the new quantity of interest, i.e., the posterior distribution of decision uncertainty $\mathcal{D}_{\text{best}}(\tilde \xi)$ is taken into account,
\begin{equation}\label{eq:4}
\begin{aligned}
   \text{DEIG}(\xi^*,d^*) = \arg\max_{(\xi_j,d_j)\in D_q}(H[p(\mathcal{D}_{\text{best}}(\tilde\xi) \mid D)]\\
-  \mathbb{E}_{y_j \sim p(y_j \mid (\xi_j,d_j), D)}[  
    H[p(\mathcal{D}_{\text{best}}(\tilde\xi) \mid D \cup \{(\xi_j,d_j, y_j)\})]]).
\end{aligned}
\end{equation}

\section{Adversarially Robust Bayesian Decision Making}\label{ar-deig}
While downstream decision-making is the ultimate objective, it is crucial to account for adversarial variables that may undermine these decisions if we optimize solely for immediate performance. Such variables can have long-term detrimental effects, causing decisions that appear optimal in the short-term to fail under future conditions. Therefore, it is essential to model and anticipate changes in these variables, aiming for decisions that are both robust and optimal.

For simplicity, we assume that the decision-maker is able to identify these variables, for example a clinician defining realistic variation in cholesterol readings. This is practically relevant because domain experts are part of the process. So our proposed method remains agnostic to how these variables are identified and can be combined with sensitivity analysis, domain-driven feature selection, or data-driven instability detection. 

Consider nominal design variables as $\xi_t$ and adversarial design variables as $\xi_a$ (both vectors) and denoted as $\bold{\xi} = (\bold{\xi}_t,\bold{\xi}_a)$. The downstream decision space is denoted as $\mathcal{S}$. The outcomes observed after taking a decision $d \in \mathcal{S}$ is represented as $y_i \in \mathcal{Y}$. The dataset $D=\{((\bold{\xi}_t,\bold{\xi}_a)_i,d_i,y_i)\}_{1:N}$ contains the design variables, adversarial design variables, the decision, and the outcome of the decision. 
Our main objective is to query a design from the query pool that will help us make an adversarially robust decision for a new design. For each decision $d \in \mathcal{S}$, we define the latent utility function 
$U_d(\xi_t,\xi_a) \rightarrow \mathbb{R}$, which quantifies the quality of taking a decision $d$ when the design is $(\xi_t,\xi_a)$. Throughout this work, we set $U_d=y_d$, which is the outcome we estimate from the predictive model for a decision $d$.
For adversarial robustness, in this paper we consider the case where we want to build robustness to adversarial perturbations for the adversarial variable $\xi_a$ up to a user-specified level epsilon ($\epsilon$) as follows, \[
\mathcal{A}_{\epsilon}(\xi_a) = \{\xi_a' : \|\xi_a' - \xi_a\| \le \epsilon\},
\] where the value of $\epsilon$ controls the scope of the adversarial perturbation.
This has a natural game-theoretic interpretation. 
Specifically, the decision-making problem can be viewed as a two-player 
Stackelberg game \citep{von1947theory,grunwald2004game}  in which the decision-maker first selects a decision 
$d \in \mathcal{S}$, and an adversary subsequently chooses a perturbation 
$\xi_a' \in \mathcal{A}_{\epsilon}(\xi_a)$ to minimize the resulting utility. In scientific settings we do not necessarily assume that the perturbation is adversarial, but this is a technical way of deriving worst-case guarantees. Under this perspective, the objective is to select the decision that 
maximizes the worst-case latent utility,
\[
d^*(\xi_t,\xi_a;\epsilon)
=
\arg\max_{d \in \mathcal{S}}
\min_{\xi_a' \in \mathcal{A}_{\epsilon}(\xi_a)}
U_d(\xi_t,\xi_a').
\]

Our formulation is closely related to the targeted active learning
framework of \citet{filstroff2024targeted}, which
connects utility modeling with downstream decision-making. We redefine the target of information as a robust decision random variable. Adapting this perspective to an adversarial setting, we define the
robust value of each decision as the worst-case latent utility over
the admissible perturbation set:
\begin{equation}\label{utility}
V_d(\xi_t,\xi_a;\epsilon)=
\min_{\xi_a' \in \mathcal{A}_{\epsilon}(\xi_a)}
U_d(\xi_t,\xi_a').
\end{equation}

The robust-optimal decision is therefore given by,
\begin{equation}
d^*(\xi_t,\xi_a;\epsilon)
=
\arg\max_{d \in \mathcal{S}}
V_d(\xi_t,\xi_a;\epsilon).  
\end{equation}
\begin{proposition}\label{prop1}
Assume that the latent utility $U_d(\xi_t,\xi_a')$ is well-defined for $\xi_a' \in \mathcal{A}_{\epsilon}(\xi_a)$. Then, for every decision $d \in \mathcal{S}$, $V_d(\xi_t,\xi_a;\epsilon)\leq U_d(\xi_t,\xi_a')$, and consequently, $\max_{d \in \mathcal{S}} V_d(\xi_t,\xi_a;\epsilon) \leq \max_{d \in \mathcal{S}}U_d(\xi_t,\xi_a')$.
\end{proposition}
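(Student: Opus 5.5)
The argument is a direct consequence of the definition of $V_d$ in \cref{utility} as a minimum over the admissible set $\mathcal{A}_{\epsilon}(\xi_a)$. Both inequalities are to be read for an arbitrary fixed $\xi_a' \in \mathcal{A}_{\epsilon}(\xi_a)$. I will first settle that the minimum is meaningful. Under the well-definedness assumption, I read $V_d$ as an infimum, which is always defined in $[-\infty,\infty)$. If one wants it attained, one can add a compactness/continuity remark: $\mathcal{A}_{\epsilon}(\xi_a)$ is a closed norm ball, hence compact in finite dimensions, and continuity of $U_d$ in $\xi_a'$ would give attainment. Attainment is not needed for the inequality, so I will only note this in passing.

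\textbf{Step 1 (pointwise bound).} Fix $d \in \mathcal{S}$ and any $\xi_a' \in \mathcal{A}_{\epsilon}(\xi_a)$. By definition of the infimum,
\[
V_d(\xi_t,\xi_a;\epsilon)=\min_{\xi_a'' \in \mathcal{A}_{\epsilon}(\xi_a)} U_d(\xi_t,\xi_a'') \le U_d(\xi_t,\xi_a'),
\]
since $\xi_a'$ is itself a feasible point of the minimization. I will rename the dummy variable as $\xi_a''$ so that it is not confused with the fixed $\xi_a'$ in the statement.

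\textbf{Step 2 (passing to the max).} For each $d$, Step 1 gives $V_d \le U_d(\xi_t,\xi_a') \le \max_{d'' \in \mathcal{S}} U_{d''}(\xi_t,\xi_a')$. Taking the maximum over $d$ on the left, with the right-hand side independent of $d$, yields
\[
\max_{d\in\mathcal{S}} V_d(\xi_t,\xi_a;\epsilon) \le \max_{d\in\mathcal{S}} U_d(\xi_t,\xi_a').
\]
If $\mathcal{S}$ is infinite, I will use suprema instead; the argument is unchanged.

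The only real obstacle is making precise the quantifier on $\xi_a'$ and the existence of the min and max. Once $\xi_a'$ is fixed and feasible, both inequalities are immediate. In particular, taking $\xi_a'=\xi_a$, which lies in $\mathcal{A}_{\epsilon}(\xi_a)$ because $\epsilon \ge 0$, shows that the robust value never exceeds the nominal optimal utility.
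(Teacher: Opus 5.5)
Your proof is correct and follows the paper's argument: $\xi_a'$ is a feasible point of the minimization defining $V_d$, so $V_d(\xi_t,\xi_a;\epsilon)\le U_d(\xi_t,\xi_a')$, and taking the maximum over $d\in\mathcal{S}$ preserves the inequality. Your remarks on reading the minimum as an infimum, on attainment, and on suprema for infinite $\mathcal{S}$ are harmless refinements that the paper leaves out.
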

\begin{proof}
Since $\xi_a' \in \mathcal{A}(\xi_a)$, the nominal utility $U_d(\xi_t,\xi_a')$ is one of the candidates in the minimization defining $V_d(\xi_t,\xi_a;\epsilon)$. Therefore $V_d(\xi_t,\xi_a;\epsilon) \leq U_d(\xi_t,\xi'_a)$. Taking the maximum over $d \in \mathcal{S}$ on both sides gives
$\max_{d \in \mathcal{S}} V_d(\xi_t,\xi_a;\epsilon) \leq \max_{d \in \mathcal{S}} U_d(\xi_t,\xi'_a)$.
\end{proof}
 \cref{prop1} formalizes the cost of adversarial robustness in decision making, meaning optimizing decisions for worst-case perturbations cannot give higher utility than the nominal optimum.

\begin{proposition}\label{prop2}
Let $0 \le \epsilon_1 \le \epsilon_2$, and let the corresponding adversarial perturbation sets be $\mathcal{A}_{\epsilon_1}(\xi_a)\subseteq \mathcal{A}_{\epsilon_2}(\xi_a)$. Then, for every decision $d \in \mathcal{S}$, the robust utility is non-increasing with respect to the adversarial radius $\epsilon$.
\end{proposition}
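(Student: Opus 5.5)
The plan is to reduce the claim to the elementary fact that minimizing over a larger set gives a value no larger. We fix a decision $d \in \mathcal{S}$ and the nominal variables $\xi_t$ and $\xi_a$. We then compare the two robust values
\[
V_d(\xi_t,\xi_a;\epsilon_1)=\min_{\xi_a' \in \mathcal{A}_{\epsilon_1}(\xi_a)} U_d(\xi_t,\xi_a'), \qquad
V_d(\xi_t,\xi_a;\epsilon_2)=\min_{\xi_a' \in \mathcal{A}_{\epsilon_2}(\xi_a)} U_d(\xi_t,\xi_a'),
\]
as defined in \cref{utility}. First I would record the inclusion $\mathcal{A}_{\epsilon_1}(\xi_a)\subseteq \mathcal{A}_{\epsilon_2}(\xi_a)$. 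It is given in the statement, and it also follows directly from the definition: $\|\xi_a'-\xi_a\|\le\epsilon_1\le\epsilon_2$.

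The key step is the argument used in the proof of \cref{prop1}, applied pointwise. Take any $\xi_a' \in \mathcal{A}_{\epsilon_1}(\xi_a)$. By the inclusion, $\xi_a'$ is also a candidate in the minimization defining $V_d(\xi_t,\xi_a;\epsilon_2)$. Hence $V_d(\xi_t,\xi_a;\epsilon_2)\le U_d(\xi_t,\xi_a')$, which is exactly the first inequality of \cref{prop1} with radius $\epsilon_2$. This lower bound holds uniformly over all $\xi_a'\in\mathcal{A}_{\epsilon_1}(\xi_a)$. Taking the minimum (or infimum) over such $\xi_a'$ therefore gives
\[
V_d(\xi_t,\xi_a;\epsilon_2)\le V_d(\xi_t,\xi_a;\epsilon_1).
\]
Since $\epsilon_1\le\epsilon_2$ were arbitrary, $\epsilon\mapsto V_d(\xi_t,\xi_a;\epsilon)$ is non-increasing for every $d$. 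As a corollary, taking the maximum over $d\in\mathcal{S}$ on both sides shows that the robust-optimal value $\max_d V_d$ is also non-increasing in $\epsilon$.

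The only real obstacle is whether the minima exist. The argument is cleanest if the $\min$ in \cref{utility} is read as an infimum, in which case the monotonicity of the infimum under set inclusion needs no further assumptions. If one insists on attained minima, I would add a standing assumption: $U_d(\xi_t,\cdot)$ is continuous and the balls $\mathcal{A}_\epsilon(\xi_a)$ are compact, as in finite dimensions. Existence then follows from the extreme value theorem, and the inequality above is unchanged.
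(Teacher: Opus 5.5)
Your proof is correct and follows the paper's route: the inclusion $\mathcal{A}_{\epsilon_1}(\xi_a)\subseteq\mathcal{A}_{\epsilon_2}(\xi_a)$ gives $V_d(\xi_t,\xi_a;\epsilon_2)\le V_d(\xi_t,\xi_a;\epsilon_1)$, because a minimum over a larger set is no larger, and taking the maximum over $d$ preserves the inequality. Your pointwise unpacking of that step and your remark on whether the minima are attained only add detail the paper leaves implicit.
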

\begin{proof}
Since $\epsilon_1 \le \epsilon_2$, we have 
$\mathcal{A}_{\epsilon_1}(\xi_a) \subseteq \mathcal{A}_{\epsilon_2}(\xi_a)$. Hence,
\[
\begin{aligned}
V_d(\xi_t,\xi_a;{\epsilon_2})
&= \min_{\xi_a' \in \mathcal{A}_{\epsilon_2}(\xi_a)}
U_d(\xi_t,\xi_a') \\
&\le
\min_{\xi_a' \in \mathcal{A}_{\epsilon_1}(\xi_a)}
U_d(\xi_t,\xi_a') &= V_d(\xi_t,\xi_a;{\epsilon_1}).
\end{aligned}
\]
Taking the maximum over $d \in \mathcal{S}$ preserves the inequality.
\end{proof}
\cref{prop2} formalizes that increasing the adversarial budget can only make decisions more conservative: as the perturbation set expands, the worst-case utility cannot improve. Consequently, designing experiments within underestimated perturbation levels may produce decisions that appear safer than they truly are. We quantify the uncertainty through the posterior probability of robust optimality for a new design $\tilde \xi = (\tilde\xi_t,\tilde\xi_a)$,
\begin{equation}\label{pi}
\pi_d^{\mathrm{rob}}
=
\mathbb{P}\!\left(
d = d^*(\tilde\xi_t,\tilde\xi_a;\epsilon)
\mid 
D
\right),  
\end{equation}

which measures how likely each decision is to remain optimal under
worst-case adversarial perturbations induced by $\epsilon$. We define $\mathcal{D}_{\text{best}}^{\mathrm{rob}}(\tilde \xi; \epsilon)$ as a discrete random variable with probability mass function $\{ \pi_d^{\mathrm{rob}}\}_{d=1}^{|\mathcal{S}|}$, where $\pi_d^{\mathrm{rob}}$ is the posterior probability that $d$ is the robust optimal decision for a new design $\tilde \xi$ under $\epsilon$. 

To reduce uncertainty in the posterior distribution $\mathcal{D}_{\text{best}}^{\mathrm{rob}}(\tilde{\xi}; \epsilon)$, we adopt an information-theoretic query strategy based on the EIG criterion in \cref{eq:4}. Specifically, given the current unlabeled query pool $D_q={(\xi_j,d_j)}$, we define the proposed \textbf{Adversarially Robust Decision Expected Information Gain (AR-DEIG)} for a new design $\tilde{\xi}$ as

\begin{equation}
\label{eq:ad-eig}
\begin{aligned}
\mathrm{AR\text{-}DEIG}(\xi^*,d^*)
=\arg\max_{(\xi_j,d_j)\in D_q}(
H[p(\mathcal{D}_{\text{best}}^{\mathrm{rob}}(\tilde \xi; \epsilon) \mid D)]\\
-
\mathbb{E}_{y_j \sim p(y_j \mid (\xi_j,d_j), D)}
[
H[p(\mathcal{D}_{\text{best}}^{\mathrm{rob}}(\tilde \xi; \epsilon) \mid D \cup \{(\xi_j,d_j,y_j)\})
]]).
\end{aligned}
\end{equation}

\begin{lemma}
    As $\epsilon \to 0$, the AR-DEIG in \cref{eq:ad-eig} becomes equivalent to the D-EIG in \cref{eq:4}.
\end{lemma}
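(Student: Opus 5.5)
The plan is to show that, as $\epsilon \to 0$, the robust value $V_d(\tilde\xi_t,\tilde\xi_a;\epsilon)$ in \cref{utility} converges to the nominal utility $U_d(\tilde\xi_t,\tilde\xi_a)$. Consequently the robust-optimal decision converges to the nominal one, the probabilities $\pi_d^{\mathrm{rob}}$ in \cref{pi} converge to $\pi_d$ in \cref{pi_deig}, and the entropy terms in \cref{eq:ad-eig} converge to those in \cref{eq:4}. I will assume, as is implicit in the setup, that each (posterior sample of the) latent utility $U_d(\tilde\xi_t,\cdot)$ is continuous in $\xi_a$ near $\tilde\xi_a$, and that $|\mathcal{S}|$ is finite.

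\textbf{Step 1: pointwise convergence of $V_d$.} For $\epsilon = 0$ we have $\mathcal{A}_0(\tilde\xi_a)=\{\tilde\xi_a\}$, so $V_d(\cdot;0)=U_d(\tilde\xi_t,\tilde\xi_a)$ exactly. For $\epsilon>0$, \cref{prop1} (with $\xi_a'=\tilde\xi_a$) gives $V_d(\cdot;\epsilon)\le U_d(\tilde\xi_t,\tilde\xi_a)$. By \cref{prop2}, $V_d$ is non-increasing in $\epsilon$. For the reverse bound, continuity gives, for any $\eta>0$, an $\epsilon_0$ with $U_d(\tilde\xi_t,\xi_a')\ge U_d(\tilde\xi_t,\tilde\xi_a)-\eta$ on $\mathcal{A}_{\epsilon_0}(\tilde\xi_a)$. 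Hence $V_d(\cdot;\epsilon)\to U_d(\tilde\xi_t,\tilde\xi_a)$ for every posterior sample of $U$.

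\textbf{Step 2: convergence of the argmax and of $\pi_d^{\mathrm{rob}}$.} This is the step I expect to be the main obstacle, because the argmax is discontinuous at ties. I will assume ties among $\{U_d(\tilde\xi)\}_{d\in\mathcal{S}}$ occur with posterior probability zero, which holds e.g.\ for Gaussian-process or other continuous predictive models. On the complement of the tie set, let $\delta>0$ be the gap between the best and second-best nominal utilities. Since $\mathcal{S}$ is finite, Step 1 gives uniform convergence over $d$. So once $\max_d|V_d-U_d|<\delta/2$, we have $d^*(\tilde\xi_t,\tilde\xi_a;\epsilon)=\arg\max_d U_d(\tilde\xi)$. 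The indicator $\mathbf{1}\{d=d^*(\cdot;\epsilon)\}$ therefore converges almost surely under $p(\cdot\mid D)$. Dominated convergence (the indicator is bounded by $1$) then yields $\pi_d^{\mathrm{rob}}\to\pi_d$ for each $d$. The same argument applies verbatim under every updated posterior $p(\cdot\mid D\cup\{(\xi_j,d_j,y_j)\})$.

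\textbf{Step 3: convergence of AR-DEIG.} Entropy of a distribution on the finite set $\mathcal{S}$ is continuous in its probability vector. Hence $H[p(\mathcal{D}^{\mathrm{rob}}_{\text{best}}(\tilde\xi;\epsilon)\mid D)]\to H[p(\mathcal{D}_{\text{best}}(\tilde\xi)\mid D)]$, and the same holds pointwise in $y_j$ for the posterior-updated entropy. These entropies are bounded by $\log|\mathcal{S}|$, so dominated convergence passes the limit through $\mathbb{E}_{y_j\sim p(y_j\mid(\xi_j,d_j),D)}$. Thus the AR-DEIG objective converges to the DEIG objective for every $(\xi_j,d_j)\in D_q$.

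Because $D_q$ is finite, uniform convergence of the objectives implies that any maximizer of \cref{eq:ad-eig} is, for small $\epsilon$, a maximizer of \cref{eq:4}, provided the DEIG maximizer is unique. When it is not unique, the statement should be read as convergence of the criterion values. At $\epsilon=0$ the two criteria coincide exactly.
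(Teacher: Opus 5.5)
Your proof is correct, but it takes a genuinely different route from the paper. The paper's argument is essentially the one line you put at the end: $\mathcal{A}_{\epsilon}(\xi_a)$ collapses to $\{\xi_a\}$, so the minimum in \cref{utility} is over a singleton and $V_d=U_d$. Then \cref{pi} coincides with \cref{pi_deig}, and \cref{eq:ad-eig} is literally \cref{eq:4}. That identity is exact and needs no assumptions, but it is a statement at $\epsilon=0$. It does not show that the criterion for small positive $\epsilon$ is close to DEIG, which is what ``as $\epsilon\to 0$'' actually asserts.

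Your argument establishes exactly this continuity at $\epsilon=0$. It has three parts: the sandwich bound on $V_d$, the random-gap argument for the argmax, and dominated convergence through $\pi_d^{\mathrm{rob}}$, the bounded entropies, and the expectation over $y_j$. The extra hypotheses are genuinely needed, not artifacts. Step 1 uses only lower semicontinuity of the sampled $U_d(\tilde\xi_t,\cdot)$ at $\tilde\xi_a$, since the upper bound is free. Without it the limit can fail: take a sample with $U_1$ equal to $1$ at $\tilde\xi_a$ and $0$ elsewhere, against $U_2\equiv 1/2$. The robust decision is then $2$ for every $\epsilon>0$, while the nominal decision is $1$. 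Likewise, if ties have positive posterior probability, the robust argmax for small $\epsilon>0$ can consistently break a nominal tie differently from whatever rule is used at $\epsilon=0$.

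Both conditions hold in the paper's settings: GP posteriors with continuous sample paths (including Mat\'ern-$1/2$) and the linear outcome model. Your caveat that the selected query converges only when the DEIG maximizer over $D_q$ is unique is a further refinement the paper's proof does not make.
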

\begin{proof}
The value of $\xi'_a$ lies within $\mathcal{A}_{\epsilon}(\xi_a)$. As $\epsilon \to 0$,
$\mathcal{A}_{\epsilon}(\xi_a) = [\xi_a, \xi_a] = \{\xi_a\}$. This collapses \cref{eq:ad-eig} to minimize over $\{\xi_a\}$. \cref{pi} proves to be the same as \cref{pi_deig} and \cref{eq:ad-eig} becomes \cref{eq:4}.
\end{proof}
\section{Experiments}
We compare our proposed method, denoted as \texttt{AR-DEIG (ours)}, against five active learning baselines, specifically:
\begin{enumerate}
    \item \textbf{Random Sampling} (\texttt{RS}): query points are drawn uniformly at random from the input space.
    
    \item \textbf{Uncertainty Sampling} (\texttt{US}): query points are selected by maximizing the posterior predictive variance.
    
    \item \textbf{Standard Expected Information Gain} (\texttt{PEIG}) \citep{houlsby2011bayesian}: query points are chosen to maximize the expected information gain about the predictive outcome.
    
    \item \textbf{Targeted Expected Information Gain} (\texttt{TEIG}): query points are selected to maximize the expected information gain about a specific target point.
    
    \item \textbf{Decision-based Expected Information Gain} (\texttt{DEIG}) \citep{filstroff2024targeted}: query points are chosen to minimize the entropy of the optimal decision.
\end{enumerate}
To approximate the expectation in \cref{eq:ad-eig}, we use the Monte Carlo \citep{james1980monte} and Gaussian-Hermite quadrature \citep{liu1994note} approximation schemes discussed in \cref{implement}. The code of our proposed method is available at
\href{https://github.com/haripriyaaharikumar/Adversarially-Robust-DEIG}
{\textcolor{linkblue}{https://github.com/haripriyaaharikumar/AR-DEIG}}.
\begin{figure*}
    \centering
    \includegraphics[width=0.95\linewidth]{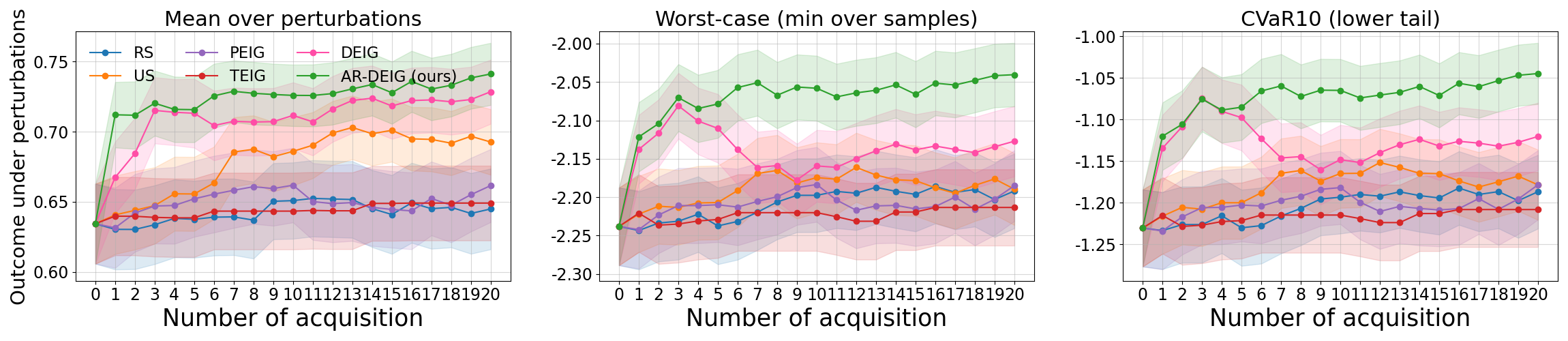}
    \caption{Adversarial Robustness Evaluation: AR-DEIG achieves the strongest robustness across acquisition steps, outperforming baselines in both average and tail-risk metrics. This is reflected by the green curves attaining higher values in the mean panel and less negative values in the worst-case and CVaR10 panels across the acquisition steps. Metrics are computed over 5,000 adversarial samples with perturbation budget $\epsilon$ = 0.3. The x-axis shows the number of acquisitions; mean (left), worst-case (middle), and CVaR10 (right).}
    \label{fig:2}
\end{figure*}
\subsection{Evaluation metrics}\label{evalmetrics}
\subsubsection{Adversarial Robustness Evaluation}\label{adv_rob}
\paragraph{Mean over perturbations}We generated $N$ adversarial perturbations for each of the $M$ test instances by systematically varying inputs along adversarial directions. Each perturbed sample was evaluated using the decision function associated with its original test instance. Performance was then aggregated by averaging outcomes across all $N$ adversarial samples at each acquisition step.
\paragraph{Worst case}For each test instance, the minimum (worst-case) outcome was identified across its $N$ adversarial perturbations. These per-instance worst-case values were then aggregated over all test data, and the mean and standard error were computed at each acquisition step.
\paragraph{CVaR bottom 10 percentile}
For each test instance, $N$ adversarial perturbations were generated, and the outcomes in the lowest $10^{\mathrm{th}}$ percentile were selected. These tail outcomes were then aggregated across all test data, and the mean and standard error were computed at each step.
\subsubsection{Nominal Evaluation}\label{normal}
This is applicable when ground-truth decisions are available.
\paragraph{Accuracy}
For each of the $M$ test instances, a binary accuracy indicator is defined, taking the value 1 when the predicted decision matches with the ground-truth decision, and 0 otherwise.
\begin{figure*}
    \centering
    \includegraphics[width=0.95\linewidth]{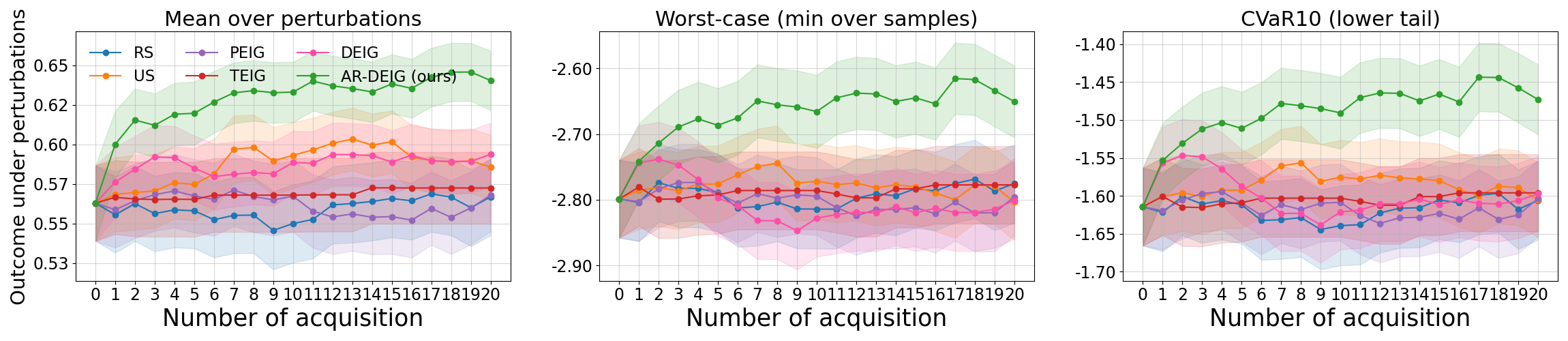}
    \caption{Adversarial Robustness Evaluation: AR-DEIG maintains superior robustness under stronger perturbations, with a larger gap over baselines, particularly in worst-case and CVaR10 metrics. This is evident as the green curves remain consistently higher (or less negative) than others across all panels as acquisitions increase. Metrics are computed over 5,000 adversarial samples with perturbation budget $\epsilon$ = 0.5. The x-axis shows the number of acquisitions; mean (left), worst-case (middle), and CVaR10 (right).}
    \label{fig:3}
\end{figure*}
\begin{figure*}[htbp]
    \begin{subfigure}[b]{0.50\textwidth}
        \centering
        \includegraphics[width=\textwidth]{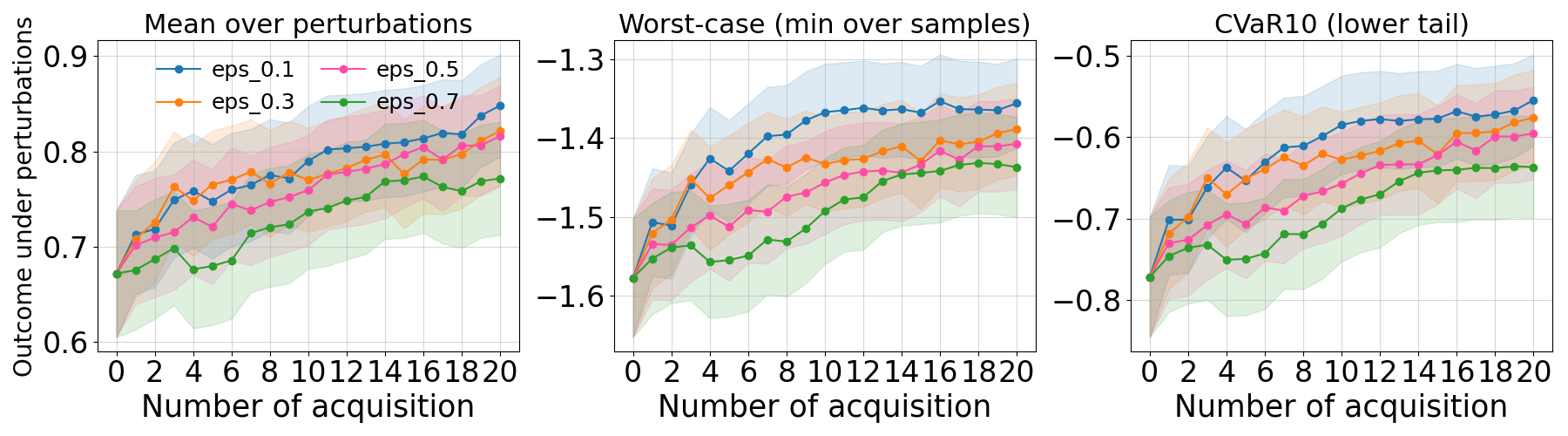}
        \caption{$\epsilon_{eval}=0.1$}
        \label{small}
    \end{subfigure}
    \begin{subfigure}[b]{0.50\textwidth}
        \centering
        \includegraphics[width=\textwidth]{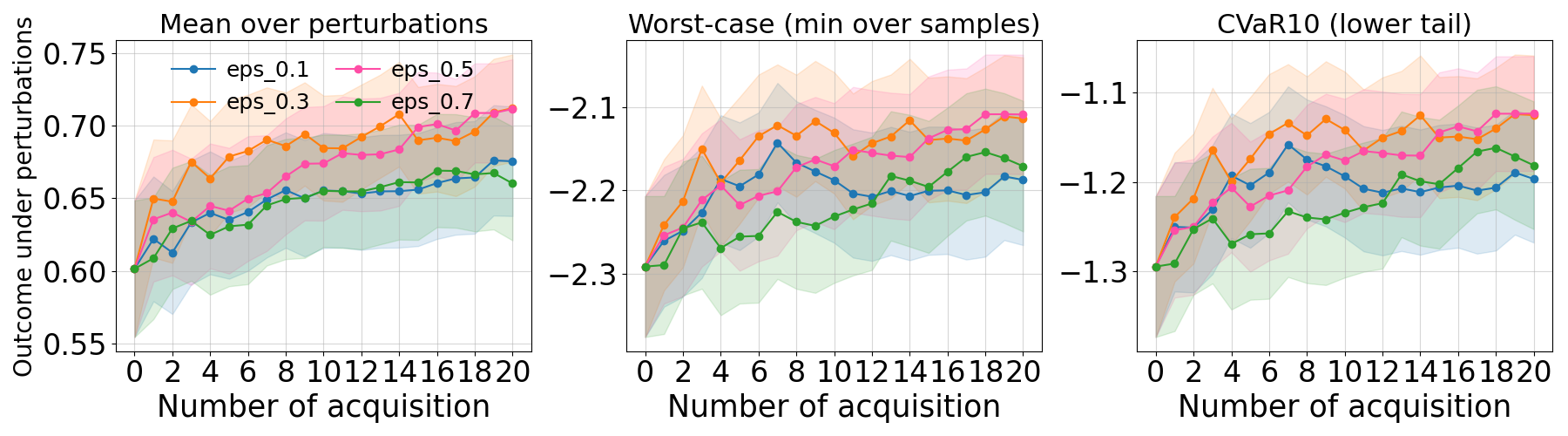}
        \caption{$\epsilon_{eval}=0.3$}
        \label{large}
    \end{subfigure}
\caption{Cross-$\epsilon$ robustness evaluation ($\epsilon_{eval}$ vs. $\epsilon$) for AR-DEIG. We vary the AR-DEIG acquisition budget $\epsilon\in\{0.1,0.3,0.5,0.7\}$ and evaluate the selected decisions under fixed evaluation budgets: $\epsilon_{eval}=0.1$ (left) and $\epsilon_{eval}=0.3$ (right). The x-axis shows the number of acquisitions Vs mean (left), worst-case (middle), and CVaR10 (right) in (a) and (b). Smaller $\epsilon$ performs best under mild evaluation shifts, while intermediate $\epsilon$ values are stronger under larger evaluation shifts.}
    \label{fig:crossepsilon}
\end{figure*}
\begin{figure*}
    \begin{subfigure}{0.48\linewidth}
        \centering
        \begin{minipage}{0.48\linewidth}
            \includegraphics[width=\linewidth]{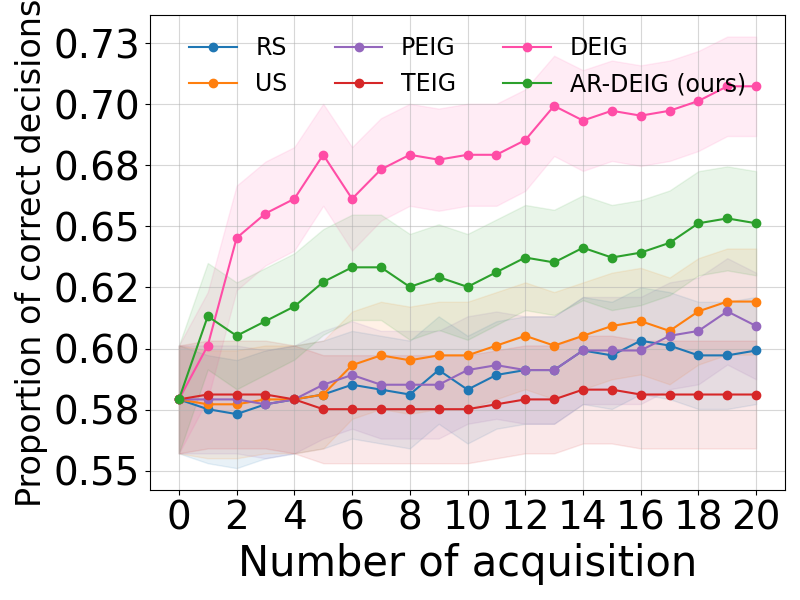}
            \caption*{$\epsilon=0.3$}
        \end{minipage}
        \hfill
        \begin{minipage}{0.48\linewidth}
            \includegraphics[width=\linewidth]{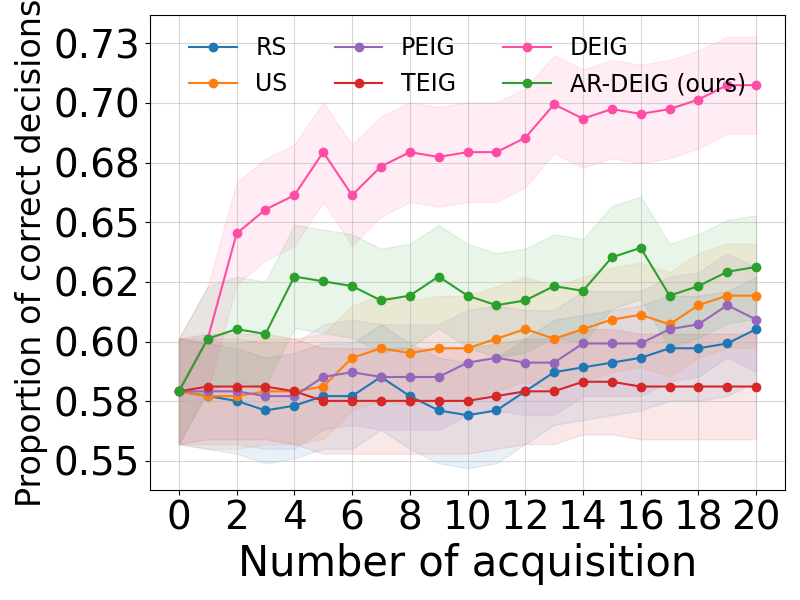}
            \caption*{$\epsilon=0.5$}
        \end{minipage}
        \caption{Accuracy}
        \label{fig:acc_group}
    \end{subfigure}
    \hfill
    \begin{subfigure}{0.48\linewidth}
        \centering
        \begin{minipage}{0.48\linewidth}
            \includegraphics[width=\linewidth]{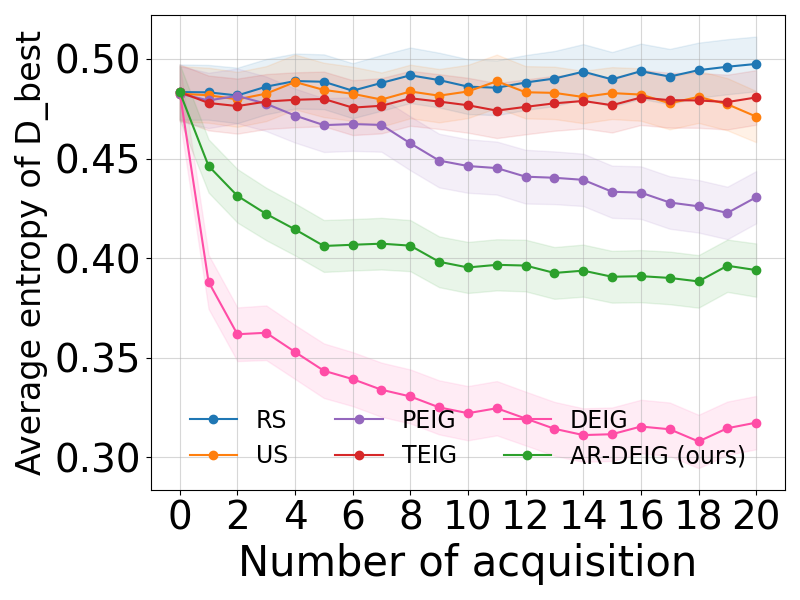}
            \caption*{$\epsilon=0.3$}
        \end{minipage}
        \hfill
        \begin{minipage}{0.48\linewidth}
            \includegraphics[width=\linewidth]{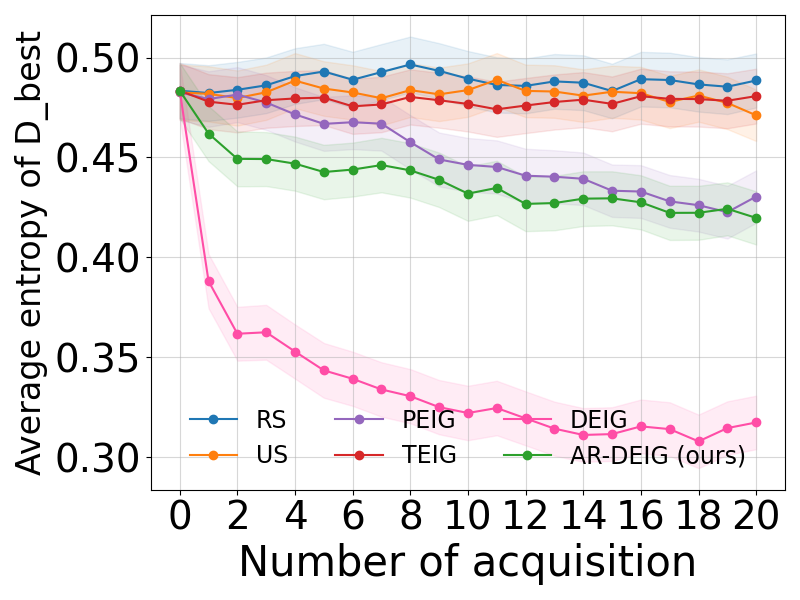}
            \caption*{$\epsilon=0.5$}
        \end{minipage}
        \caption{Entropy}
        \label{fig:ent_group}
    \end{subfigure}
    \caption{Nominal Evaluation: DEIG achieves the best performance, attaining higher accuracy and significantly lower predictive entropy across acquisition steps, while AR-DEIG remains competitive. This is evident as the pink curves dominate in the accuracy panels (left) and consistently achieve the lowest values in the entropy panels (right), with the green curves (AR-DEIG) generally tracking closely behind leading methods. Results are computed on 500 test points without adversarial perturbations, with acquisition performed under budgets $\epsilon=0.3$ and $\epsilon=0.5$ for AR-DEIG. The x-axis denotes the number of acquisitions; panels show accuracy (a) and entropy (b), each comparing the two perturbation levels.}
    \label{fig:7all}
\end{figure*}
\paragraph{Entropy of the posterior}
We compute the entropy of the posterior decision for each of the $N$ test data at each acquisition step.
\begin{figure}[t]
    \begin{subfigure}{0.48\linewidth}
        \includegraphics[width=\linewidth]
     {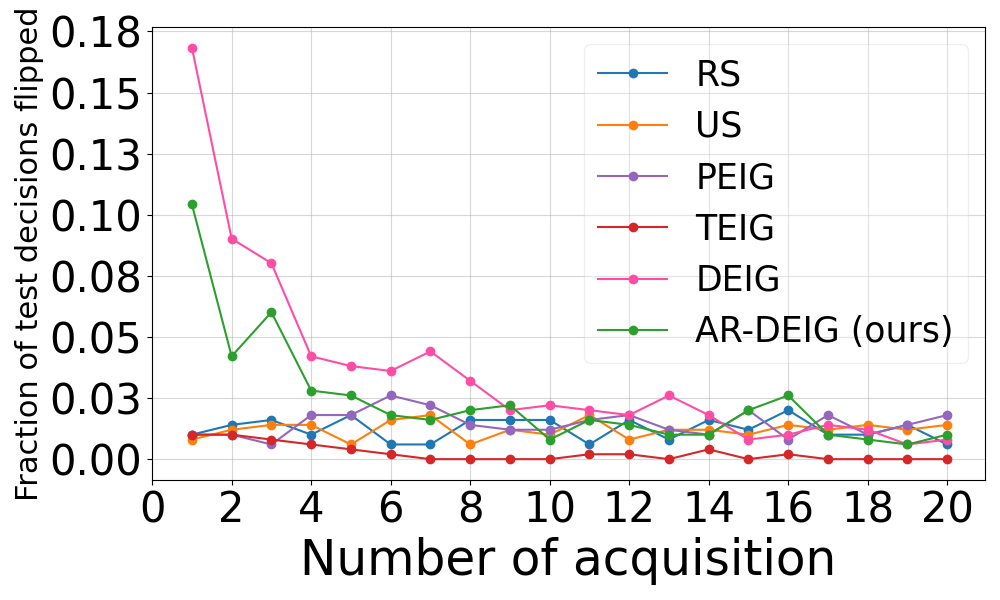}
        \caption{$\epsilon$ = 0.3}
        \label{fig:6b}
    \end{subfigure}
    \hfill
    \begin{subfigure}{0.48\linewidth}
        \includegraphics[width=\linewidth]{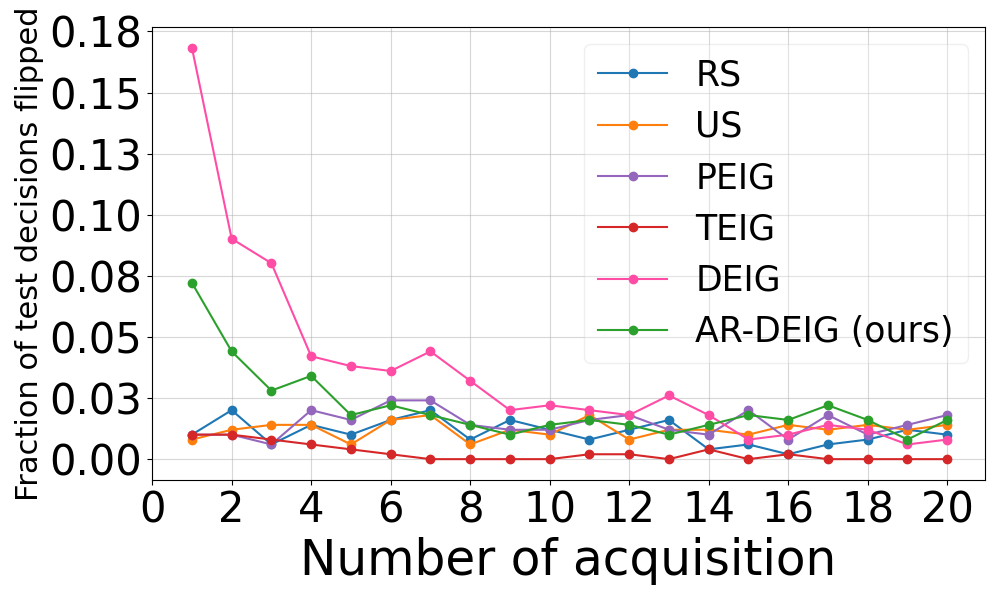}
        \caption{$\epsilon$ = 0.5}
        \label{fig:6c}
    \end{subfigure}
    \caption{Decision flips over acquisitions: Higher flip rates indicate frequent reassignment of test instances to different decision curves, reflecting instability and correlating with degraded adversarial performance.}
    \label{fig:6flips_main}
\end{figure}
\subsection{Synthetic 1-d regression}
\subsubsection{Data generation}\label{synthetic1d}
We generated 1-dimensional synthetic data with 100 training data, 299 data in the query pool, and 500 test data with three decision options. To mimic real-world decision-making scenarios where some outcomes are smooth while others exhibit irregular or rapidly varying behavior, we construct a synthetic dataset with heterogeneous smoothness across decision outcomes. The covariates $x$ are sampled from a standard normal distribution. We generate three different outcomes (decisions) as independent realizations of Gaussian Process (GPs). We set one decision outcome to be smoother than the rest (an example is shown in \cref{fig:sample} as blue (smooth), orange, and green curves). To be more specific, we use Matérn Kernel for this dataset. For the smooth decision $d=0$, we set $ \ell_{\text{smooth}}=0.6$ and $\nu_{\text{smooth}} = 5/2$ while for the remaining decisions ($d=1,2$) we fix $\ell_{\text{rough}}=0.18$ and $\nu_{\text{rough}} = 1/2$. 
\begin{figure*}[t]
    \centering
    \begin{subfigure}{0.30\linewidth}
        \includegraphics[width=\linewidth]{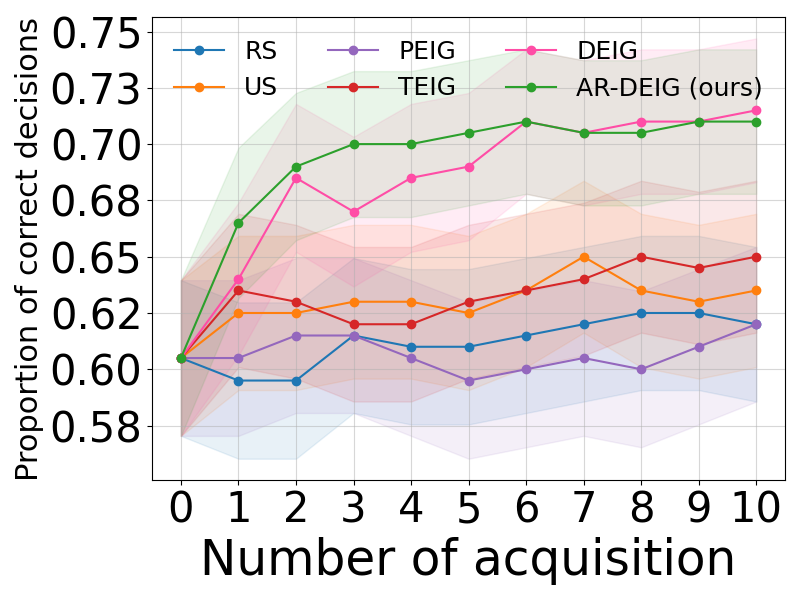}
        \caption{$\epsilon$ = 0.1}
        \label{fig:7a}
    \end{subfigure}
    \hfill
    \begin{subfigure}{0.30\linewidth}
        \includegraphics[width=\linewidth]
{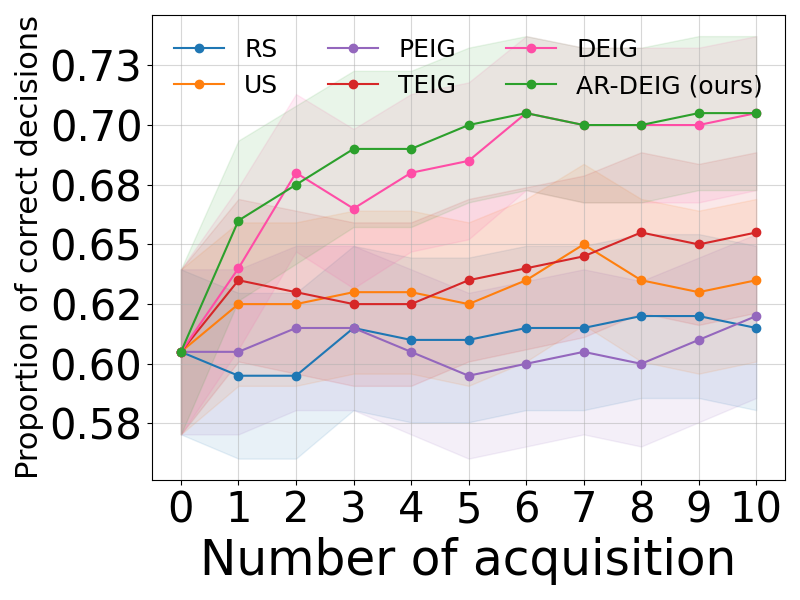}
        \caption{$\epsilon$ = 0.3}
        \label{fig:7b}
    \end{subfigure}
    \hfill
    \begin{subfigure}{0.30\linewidth}
        \includegraphics[width=\linewidth]{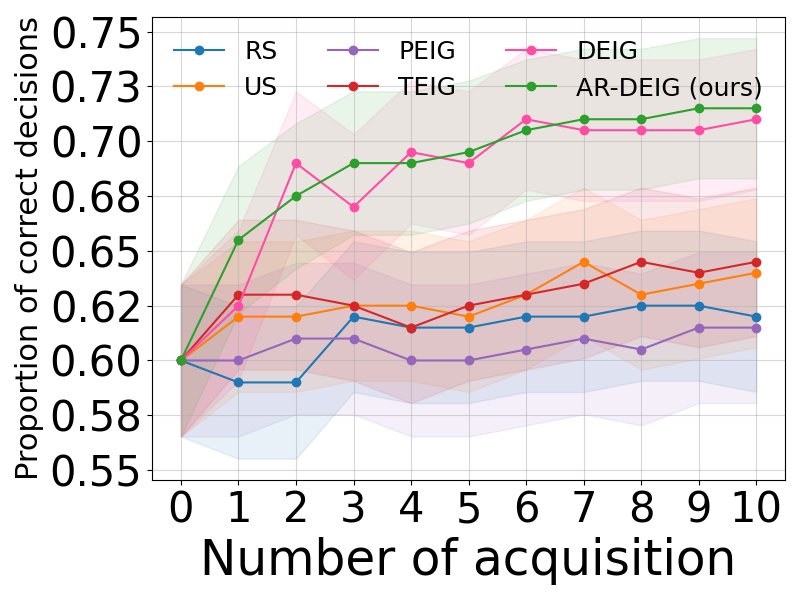}
        \caption{$\epsilon$ = 0.5}
        \label{fig:7c}
    \end{subfigure}
    \caption{Recovery of the ground-truth robust-optimal decision versus number of acquisitions (one adversarial dimension out of five). Ground-truth decisions are computed via worst-case evaluation over adversarial perturbations. AR-DEIG consistently outperforms baseline methods across $\epsilon \in  \{0.1, 0.3, 0.5\}$, demonstrating improved identification of robust decisions.}
    \label{fig:7advs}
\end{figure*}
\begin{figure*}[t]
    \centering
    \begin{subfigure}{0.30\linewidth}
        \includegraphics[width=\linewidth]{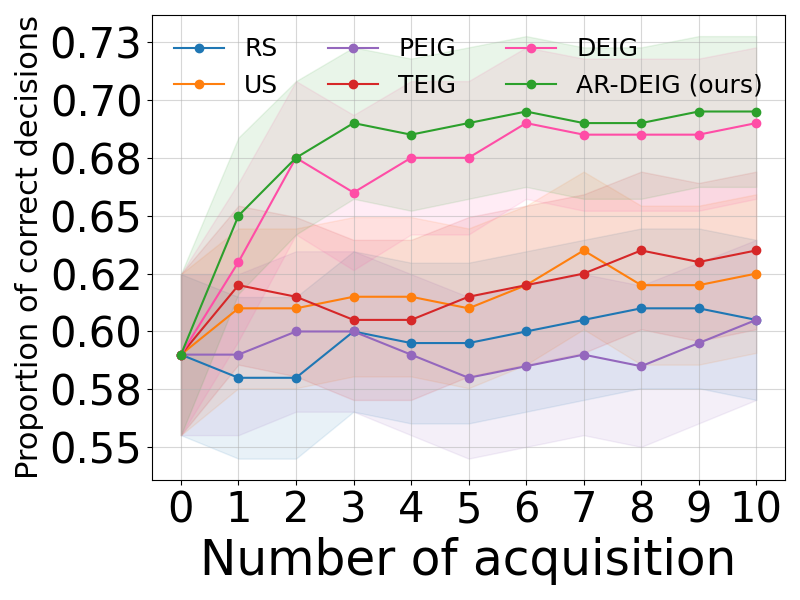}
        \caption{$\epsilon$ = 0.1}
        \label{fig:8a}
    \end{subfigure}
    \hfill
    \begin{subfigure}{0.30\linewidth}
        \includegraphics[width=\linewidth]
{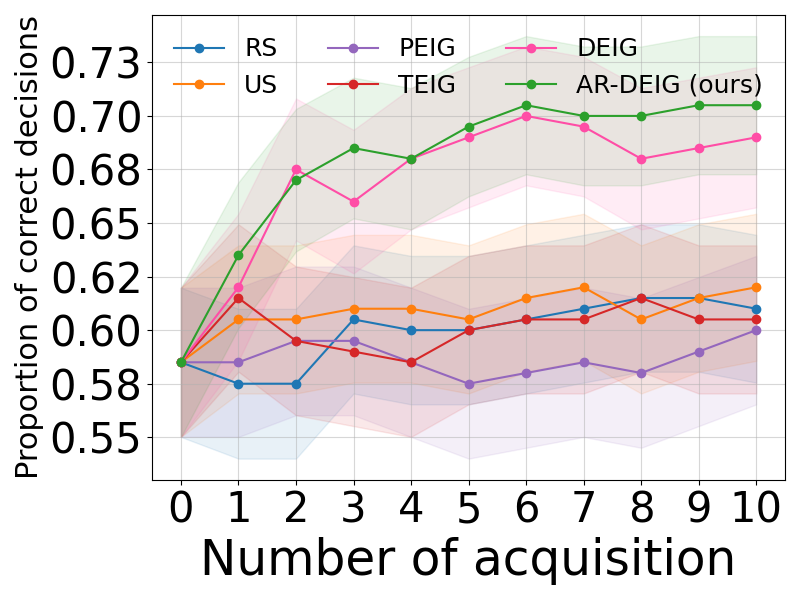}
        \caption{$\epsilon$ = 0.3}
        \label{fig:8b}
    \end{subfigure}
    \hfill
    \begin{subfigure}{0.30\linewidth}
        \includegraphics[width=\linewidth]
        {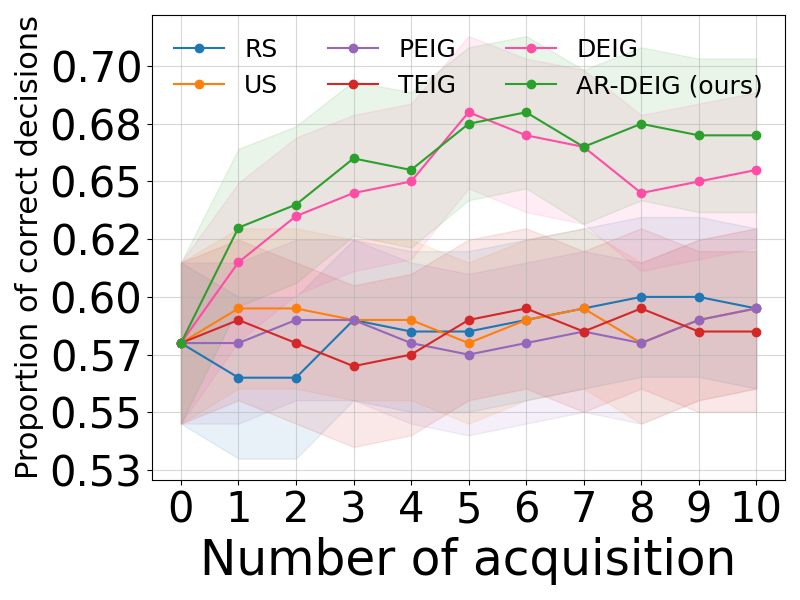}
        \caption{$\epsilon$ = 0.5}
        \label{fig:8c}
    \end{subfigure}
    \hfill
    \caption{Recovery of the ground-truth robust-optimal decision versus number of acquisitions (three adversarial dimensions out of five). Accuracy is computed with respect to robust-optimal decisions defined via worst-case outcomes over adversarial perturbations. As the dimensionality of adversarial variation increases, AR-DEIG maintains superior performance across $\epsilon \in  \{0.1, 0.3, 0.5\}$, indicating improved scalability in identifying robust decisions.}
    \label{fig:8advs}
\end{figure*}
\begin{figure*}
    \centering
    \includegraphics[width=0.95\linewidth]{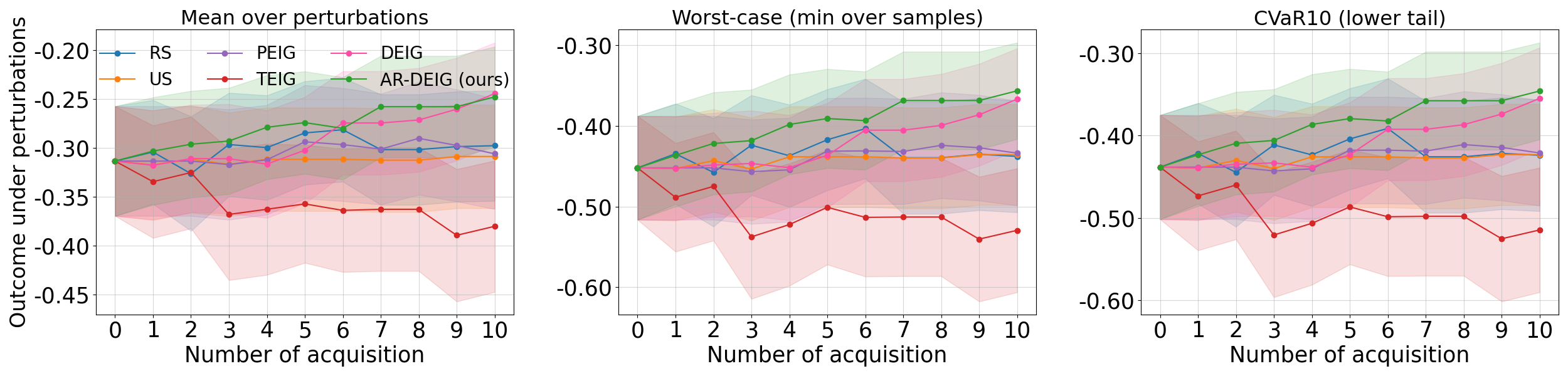}
    \caption{Adversarial Robustness Evaluation: AR-DEIG achieves consistently favorable robustness across acquisition steps, exceeding baselines in both average and tail-risk metrics. This is reflected by the green curves attaining better values in the mean panel and competitive-to-better outcomes in the worst-case and CVaR10 panels. Results are computed over 5,000 adversarial samples with perturbation budget $\epsilon$ = 0.7. Panels show mean (left), worst-case (middle), and CVaR10 (right).}
    \label{fig:oaiworstcase_main}
\end{figure*}
\subsubsection{Adversarial Robustness Evaluation}
We evaluated the reliability of the decisions for 500 test data by measuring the proportion of correct outcomes in 5000 adversarial samples. Specifically, adversarial samples were generated such that they fulfill the assumptions in \cref{ar-deig}.

We report the mean, worst-case, and CVaR10 (bottom 10\%) (metrics defined in \cref{adv_rob}) performance for these adversarial samples with respect to the acquisition steps, in \cref{fig:2} and \cref{fig:3} with $\epsilon$ values as $0.3$ and $0.5$ ($\epsilon$ values $0.1$ and $0.7$ is in \cref{adv_rob_eval}). Across both perturbation levels ($0.3$ and $0.5$), AR-DEIG consistently achieves the strongest robustness, outperforming all five baseline methods in both average and tail-risk metrics. The gains are more pronounced under stronger perturbations i.e for $\epsilon=0.3$, with clear improvements in worst-case and CVaR10 performance. In contrast, D-EIG exhibits a collapse (\cref{fig:2}), which can be attributed to overfitting to nominally optimal decisions that are not robust under adversarial perturbations (see \cref{fig:acq_deig_1}-\ref{fig:acq_deig_10} in  \cref{acq_steps}).

\paragraph{Adaptive $\epsilon$ for dynamic perturbations} We have evaluated cross-robustness $\epsilon$ analysis decoupling acquisition $\epsilon$ and evaluation budgets ($\epsilon_{eval}$) for AR-DEIG as shown in Fig. \ref{fig:crossepsilon} . AR-DEIG uses $\epsilon \in \{0.1,0.3,0.5,0.7\}$ and is evaluated at fixed $\epsilon_{eval}\in\{0.1,0.3\}$. Smaller $\epsilon$ performs best under mild shifts, while intermediate values, especially $0.3$ and $0.5$, perform better at $\epsilon_{eval}=0.3$.
\subsubsection{Nominal Evaluation}
We evaluate accuracy and entropy on 500 test points under nominal (non-adversarial) conditions, as defined in \cref{normal}. \cref{fig:acc_group} and \cref{fig:ent_group} show accuracy and entropy with respect to the number of acquisitions for 
$\epsilon$ $0.3$ and $0.5$, with additional results for $\epsilon$ that is $0.1$ and $0.7$ are provided in \cref{acc_entr_plots}. DEIG achieves the best performance, attaining higher accuracy and lower predictive entropy across acquisition steps, while AR-DEIG remains competitive.This is expected, as nominal settings favor optimal solutions over robustness to perturbations.
\subsubsection{Decision Flip Rate}
To better understand the degradation in worst-case performance, we analyze the stability of the learned decision along the acquisition trajectory for each test data. Specifically, we measure the decision flip rate, defined as the fraction of test contexts whose selected decision changes between consecutive acquisition steps.

As shown in \cref{fig:6flips_main} (with $\epsilon$ values $0.3$ and $0.5$ - rest is in \cref{dfr}) AR-DEIG consistently exhibits lower decision flip rates compared to DEIG, particularly during the early and mid stages of acquisition. In contrast, DEIG demonstrates substantially higher instability, frequently revising its preferred decision as new observations are incorporated.

This instability indicates that DEIG tends to commit to decisions with narrow utility margins that are sensitive to perturbations. By explicitly accounting for adversarial variation, AR-DEIG have more stable decisions, leading to fewer reversals over the acquisition trajectory.

This decision instability coincides with the observed degradation in worst-case and CVaR performance, indicating that nominal information gain may steer the learner toward decisions that appear optimal under the posterior mean but are highly vulnerable to adversarial perturbations.
\subsection{Decision-aware active learning}
\subsubsection{Data generation}
We generated a dataset of comprising 100 training data, 299 data in query pool and 201 test data of dimensions five and four decisions. The co-variates $x$ are sampled from a standard normal distribution. We generate four different outcomes (decisions) as independent realizations of Gaussian Process with Squared Exponential (SE) kernel with variance sampled from $v \sim 0.5+2.0\times\text{U}(0,1)$) and lengthscales sampled from $l \sim \sqrt|\mathcal{S}|(0.25+0.75 \text{U}(0,1))$, where $\text{U}(0,1)$ denotes a uniform random variable between 0 and 1. These outcomes are then corrupted by Gaussian white noise. Finally, the decision variable associated with each data has been done to mimic the imbalance in treatment assignment.
\subsubsection{Adversarial Robustness Evaluation}
For each test instance, we compute an adversarially robust ground-truth decision. Specifically, we generate 5{,}000 adversarial samples in the neighborhood of the test point, evaluate the minimum outcome for each decision, and select the decision that maximizes this worst-case outcome. This selected decision is treated as the \textit{robust ground-truth decision} and is evaluated using \cref{normal}. We draw samples with the epsilon set as $0.1$, $0.3$, and $0.5$. \cref{fig:7advs} and \cref{fig:8advs} illustrate the accuracy of adversarially robust decisions at acquisition step $10$ across different perturbation levels ($\epsilon = 0.1, 0.3, 0.5$). The results in  \cref{fig:7advs} and \cref{fig:8advs} are shown for settings where the adversarial subspace comprises either one or three dimensions out of a total of five.
\subsection{Real world: Osteoarthritis Initiative (OAI) dataset}
Osteoarthritis is a degenerative joint disease with no cure, making early diagnosis and longitudinal decision-making is critical \citep{katz2021diagnosis,filstroff2024targeted}. Patient follow-ups typically occur at 12, 24, 36, 48, and >48 months (five decisions). The details of this dataset is given in \cref{realdata}. The underlying decision boundaries are often non-smooth due to clinical thresholds, population heterogeneity, and measurement noise, making the system sensitive to small input perturbations.

Adversarial robustness evaluation (\cref{adv_rob}) in \cref{fig:oaiworstcase_main} further shows that AR-DEIG achieves consistently favorable performance across acquisition steps, matching or exceeding baselines in both average and tail-risk metrics. This is reflected in less negative mean outcomes and competitive-to-better worst-case and CVaR10 values (more evaluations can be found in \cref{fig:oai_org_su} and \cref{fig:oaiworstcase_20.0}). \cref{fig:oaiworstcase_20.0} shows AR-DEIG degrades at large epsilon i.e. when $\epsilon=20.0$. This is precisely the expected behaviour for larger $\epsilon$ values. By \cref{prop2}, increasing $\epsilon$ induces more conservative decisions; for large $\epsilon$, the objective becomes overly pessimistic and performance degrades.  

\section{Related Work}
\subsection{Active Learning for Decision Making}
Active learning has been extensively studied in the context of statistical efficiency and sample selection \citep{mccallum1998employing,dasgupta2004analysis,golovin2011adaptive}, and has more recently been extended to decision-centric settings. A key line of work focuses on selecting samples that directly improve downstream decisions rather than predictive accuracy. For instance, \cite{berger2013statistical} proposes minimizing Type S error, i.e., the probability of inferring the incorrect sign of a treatment effect under data imbalance. 

Targeted active learning \citep{filstroff2024targeted} introduces a decision-aware Expected Information Gain (EIG) criterion that prioritizes queries reducing uncertainty in downstream decisions. Similarly, \cite{lacoste2011approximate} emphasize posterior regions that are most relevant for decision-making, rather than uniformly improving parameter estimation. More recently, \cite{huang2024amortized} propose an amortized framework that integrates decision-making objectives directly into experimental design. In a different direction, \cite{bal_etal_2025optimistic} adopt a game-theoretic perspective to address combinatorial and high-dimensional Bayesian optimization. Variational and amortized methods \citep{foster2019variational,foster2021deep} are promising ways to reduce computational cost. The ambiguity-set approach in \cite{go2022robust} is also relevant; however, it would face computational challenges and require estimators in \cite{foster2019variational,foster2021deep}. 

Despite these advances, existing approaches primarily assume a \emph{nominal} setting, where decisions are evaluated under the learned model without accounting for adversarial or worst-case perturbations. Consequently, they may select queries that are informative under the model but fail to improve decision robustness. Furthermore, most methods rely on standard EIG formulations, which either require costly retraining or focus on predictive uncertainty rather than uncertainty over \emph{optimal decisions}.
\subsection{Adversarial Robustness in Bayesian Methods}
Robustness in Bayesian learning has been studied from multiple perspectives. Distributionally robust approaches \citep{kirschner2020distributionally,husain2023distributionally} focus on uncertainty over data-generating distributions, rather than explicit worst-case perturbations. Other works, such as \cite{gloeckler2023adversarial}, develop adversarially robust posterior inference, but do not address the problem of data acquisition or experimental design.

In the context of Gaussian processes, \cite{bogunovic2018adversarially} propose adversarially robust optimization methods that seek solutions stable under input perturbations, which focus on identifying robust optima of the objective function.

Existing robust Bayesian methods primarily address either \emph{inference} or \emph{optimization}, but not \emph{data acquisition}. In particular, they do not provide mechanisms for selecting informative queries that improve robustness of downstream decisions. Moreover, robustness is typically defined at the level of function optimization, rather than at the level of \emph{decision uncertainty}. As a result, there is a lack of principled acquisition strategies that explicitly target the reduction of uncertainty over robust optimal decisions.

In contrast, our approach introduces an adversarially robust decision-centric acquisition function that directly quantifies information gain over the \emph{robust optimal decision}. This bridges the gap between decision-aware active learning and adversarial robustness, enabling query selection that is both informative and robust to perturbations.
\section{Conclusion}
We propose an adversarially robust, decision-aware experimental design framework that explicitly targets the stability of downstream decisions under perturbations in adversarial variables. Building on Bayesian decision theory, we introduced a worst-case utility formulation and derived the AR-DEIG acquisition criterion, which prioritizes reducing uncertainty over the robust-optimal decision rather than optimizing nominal utility alone. This shift aligns the experimental design process with the practical objective of making reliable decisions in the presence of weakly modeled effects.

Empirical evaluations on both synthetic and real-world datasets show that conventional approaches often produce high-confidence yet brittle decisions. In contrast, our method yields more stable outcomes with improved worst-case and tail-risk performance. These results highlight the importance of incorporating adversarial robustness into experimental design to ensure reliable decision-making. Future work will focus on extending robustness notions, improving scalability, and strengthening theoretical guarantees.

\begin{acknowledgements} 
This work was supported by the UKRI Turing AI World-Leading Researcher Fellowship [EP/W002973/1], UKRI AI Hub in Generative Models [EP/Y028805/1], and European Lighthouse of AI for Sustainability [ELIAS, 10080425]. S. Kaski was supported by the Research Council of Finland Flagship programme: Finnish Center for Artificial Intelligence FCAI and decisions 358958, 359567, and 359207. H. Harikumar and S. Kaski were supported by the UKRI Turing AI World-Leading Researcher Fellowship (EP/W002973/1), UKRI AI Hub in Generative Models (EP/Y028805/1), and European Lighthouse of AI for Sustainability (ELIAS, 10080425). S. Katt and S. Kaski were supported by EU funding ERC ODD-ML 101201120. YZ. Barlas was supported by a departmental studentship at The University of Manchester. The authors also thank Daolang Huang for the discussions along the progress of the work and Jaeyoung Lee for the initial discussions. The authors thank all anonymous reviewers for their constructive feedback. The
authors also acknowledge the computational resources provided by the Aalto Science-IT project.
\end{acknowledgements}





\bibliography{uai2026-template}

\newpage

\onecolumn

\appendix
\begin{center}
    \LARGE \textbf{Appendix}
\end{center}
\section{Approximation for Implementation}\label{implement}
\subsection{Expected Information Gain (EIG)}
We begin with a standard Bayesian regression model with likelihood $p(y \mid x,\theta)$ and prior $p(\theta)$, inducing the posterior $p(\theta \mid D)$. The Expected Information Gain (EIG) at a query $x$ is defined as:
\begin{equation}
\mathrm{EIG}(x) = H[p(\theta \mid D)] - \mathbb{E}_{p(y \mid x,D)}\big[H[p(\theta \mid D \cup \{(x,y)\})].
\end{equation}

This expression can be rewritten as the mutual information between $y$ and $\theta$:
\begin{equation}
I(y;\theta \mid x,D) = \iint p(y,\theta \mid x,D)\log\frac{p(y,\theta \mid x,D)}{p(y \mid x,D)p(\theta \mid x,D)}\,dy\,d\theta.
\end{equation}

By symmetry of mutual information, we obtain:
\begin{equation}
\mathrm{EIG}(x) = H[p(y \mid x,D)] - \mathbb{E}_{p(\theta \mid D)}\big[H[p(y \mid x,\theta)]\big],
\end{equation}
which avoids model retraining and operates in the output space.

\paragraph{Gaussian Process Case.}
For a non-parametric regression model
\begin{equation}
y = f(x) + \epsilon, \quad \epsilon \sim \mathcal{N}(0,\sigma^2),
\end{equation}
we obtain:
\begin{equation}
\mathrm{EIG}(x) = H[p(y \mid x,D)] - \mathbb{E}_{p(f \mid D)}\big[H[p(y \mid x,f)]\big].
\end{equation}

Under Gaussian Process (GP) \citep{williams2006gaussian} regression, the predictive posterior is Gaussian with variance $\sigma_x^2 + \sigma^2$, yielding:
\begin{equation}
\mathrm{EIG}(x) = \frac{1}{2}\left(\log(\sigma_x^2 + \sigma^2) - \log(\sigma^2)\right).
\end{equation}

Thus, maximizing EIG reduces to selecting points with high predictive variance $\sigma_x^2$.
\subsection{Gauss--Hermite Quadrature}\label{GP}

We consider expectations of the form
\begin{equation}
\mathbb{E}[f(y)] = \int f(y)\,p(y)\,dy,
\end{equation}
where $y \sim \mathcal{N}(\mu,\sigma^2)$.

Using Gauss--Hermite quadrature \cite{liu1994note} of order $N$, this expectation can be approximated as
\begin{equation}
\mathbb{E}[f(y)] \approx \frac{1}{\sqrt{\pi}} \sum_{i=1}^{N} \omega_i \, f\big(\sqrt{2}\sigma x_i + \mu\big),
\end{equation}
where $\{x_i\}_{i=1}^N$ are the roots of the Hermite polynomial $H_N(\cdot)$, and the corresponding weights $\{\omega_i\}_{i=1}^N$ are given by
\begin{equation}
\omega_i = \frac{2^{N-1} N! \sqrt{\pi}}{N^2 \left(H_{N-1}(x_i)\right)^2}.
\end{equation}

\paragraph{Remark.}
Gauss--Hermite quadrature provides an efficient deterministic approximation of Gaussian expectations and is commonly used as an alternative to Monte Carlo sampling when evaluating acquisition functions.

\begin{figure}
    \centering
\includegraphics[width=1.0\linewidth]{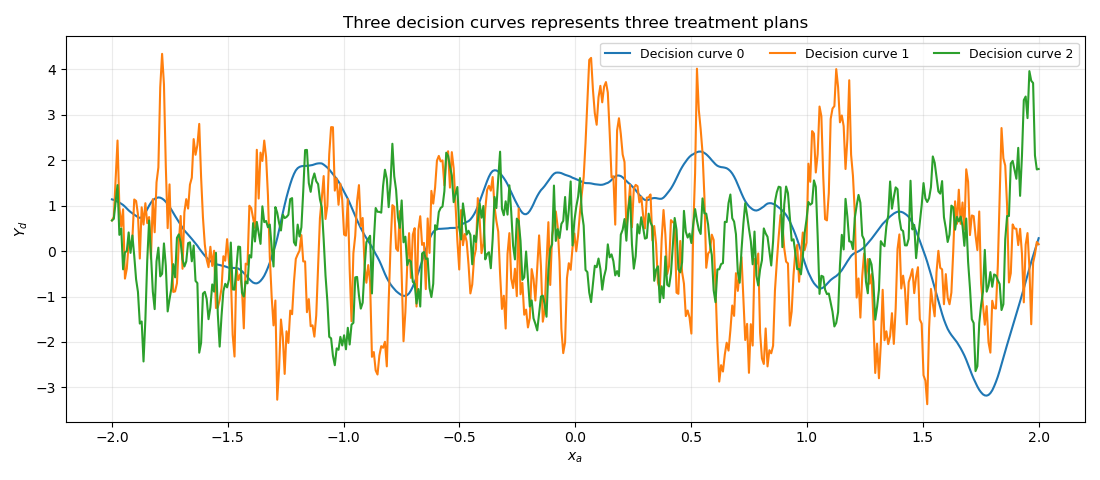}
    \caption{Three treatment plans (shown as green, orange, and blue curves) with their corresponding outcomes on the y-axis. The blue curve is smoother than the orange and green curves, reflecting differences in the underlying generation process described in \cref{synthetic1d}.}
    \label{fig:sample}
\end{figure}

\section{Computational time analysis}
We added runtime measurements to quantify the computational overhead in \cref{comptab}. Acquisition time is the mean time to select one query point (standard deviation in brackets); total time is the runtime over 20 acquisition steps. Specifically, in the 1 dimension setting, AR-DEIG takes 271.31 seconds per acquisition compared with 162.97 seconds for DEIG. In the 20 dimensions setting, AR-DEIG takes 584.96 seconds per acquisition compared with 247.25 seconds for DEIG, about 2.36× slower. The results show that AR-DEIG is computationally heavier, as expected, but still feasible.
\begin{table}[t]
\centering
\begin{tabular}{ c c c }
\hline
Method & Acquisition time (in sec.) & Total time with 20 acquisitions (in sec.) \\
\hline
 RS (1 dimension) & 0.00003 (0.000006) & 14.60 \\ 
 US (1 dimension) & 0.19 (0.01) & 21.94 \\  
 PEIG (1 dimension) & 1.68 (0.04) & 49.10 \\
 TEIG (1 dimension) & 93.27 (4.82) & 1883.97 \\
 DEIG (1 dimension) & 162.97 (7.97) &  3277.67\\
 AR-DEIG (1 dimension) & 271.31 (5.62) & 5441.31\\
 DEIG (20 dimensions) & 247.25 (13.49) & 4959.24\\
 AR-DEIG (20 dimensions) & 584.96 (14.74) & 11712.33\\
\hline
\end{tabular}
\caption{Runtime (in seconds) comparison of baseline methods - Random Sampling (RS), Uncertainty Sampling (US), Standard Expected Information Gain (PEIG), Targeted Expected Information Gain (TEIG), Decision-based Expected Information Gain (DEIG), and our proposed method Adversarially Robust Decision-Expected Information Gain (AR-DEIG). Acquisition time is the mean query-selection time per acquisition step, with standard deviation in bracket; total time is measured over 20 acquisitions.}
\label{comptab}
\end{table}
\section{Additional Experiment Results}\label{add results1}
\subsection{1d-regression}\label{1dmwc}
A sample plot illustrating three decision outcomes is shown in \cref{fig:sample}. The blue curve represents the smoother outcome, while the orange and green curves exhibit greater variability and roughness. We set one decision outcome to be smoother than the rest (an example is shown in the  \cref{fig:sample} as blue (smooth), orange, and green curves). To be more specific, we use Matérn Kernel for this generating this dataset. For the smooth decision $d=0$, we set $ \ell_{\text{smooth}}=0.6$ and $\nu_{\text{smooth}} = 5/2$ while for the remaining decisions ($d=1,2$) we fix $\ell_{\text{rough}}=0.18$ and $\nu_{\text{rough}} = 1/2$.
\begin{figure*}[htbp]
    \hspace{0.05\textwidth}
    \centering
        \includegraphics[width=\textwidth]{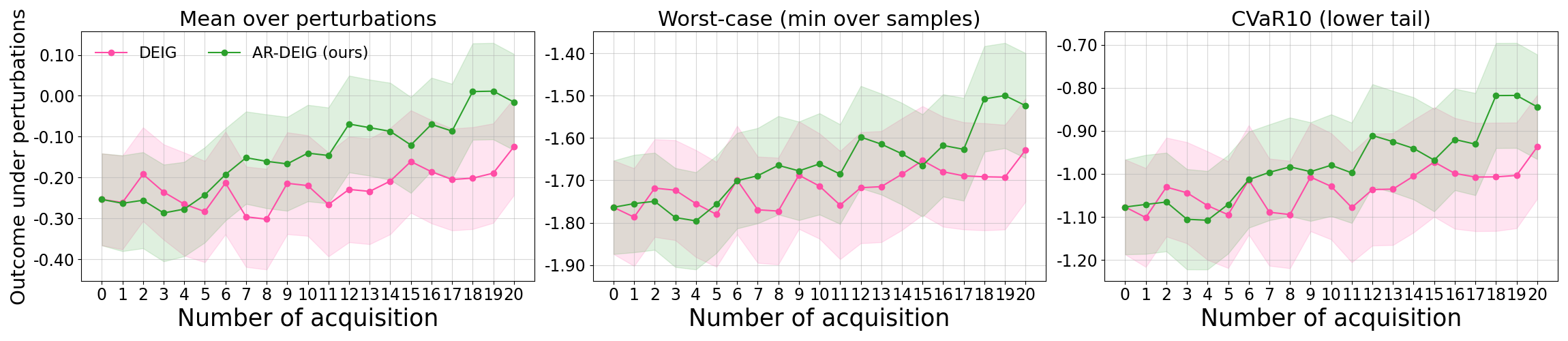}
        \caption{$\epsilon=0.5$}
        \label{small}
    \caption{Adversarial Robustness Evaluation high dimension (12 adversarial variables among total of 20 co-variates) with $\epsilon=0.5$: AR-DEIG maintains superior robustness compared to the  baseline DEIG. This is evident as the green curves remain consistently higher than others across all panels as acquisitions increase. The x-axis shows the number of acquisitions; mean (left), worst-case (middle), and CVaR10 (right).}
    \label{fig:loss-convergence}
\end{figure*}
\subsection{Higher dimensional Synthetic dataset}
To further assess scalability, we added a 20 dimensions experiment with 12 adversarial covariates and 3 decisions. To be more specific, we use Matérn Kernel for this dataset generation. For the smooth decision $d=0$, we set $ \ell_{\text{smooth}}=0.6$ and $\nu_{\text{smooth}} = 5/2$ while for the remaining decisions ($d=1,2$) we fix $\ell_{\text{rough}}=0.18$ and $\nu_{\text{rough}} = 1/2$. The results in \cref{fig:loss-convergence} show that our proposed method AR-DEIG outperforms baseline DEIG when $\epsilon=0.5$. 
\subsubsection{Adversarial Robustness Evaluation} \label{adv_rob_eval}
The mean, worst-case, and CVaR10 (bottom 10\%) (metrics defined in \cref{evalmetrics}) performance for these adversarial samples with respect to the acquisition steps, in \cref{fig:5_suppl} and \cref{fig:6_suppl} for $\epsilon$ values 0.1 and 0.7.
\begin{figure*}
    \centering
\includegraphics[width=0.95\linewidth]{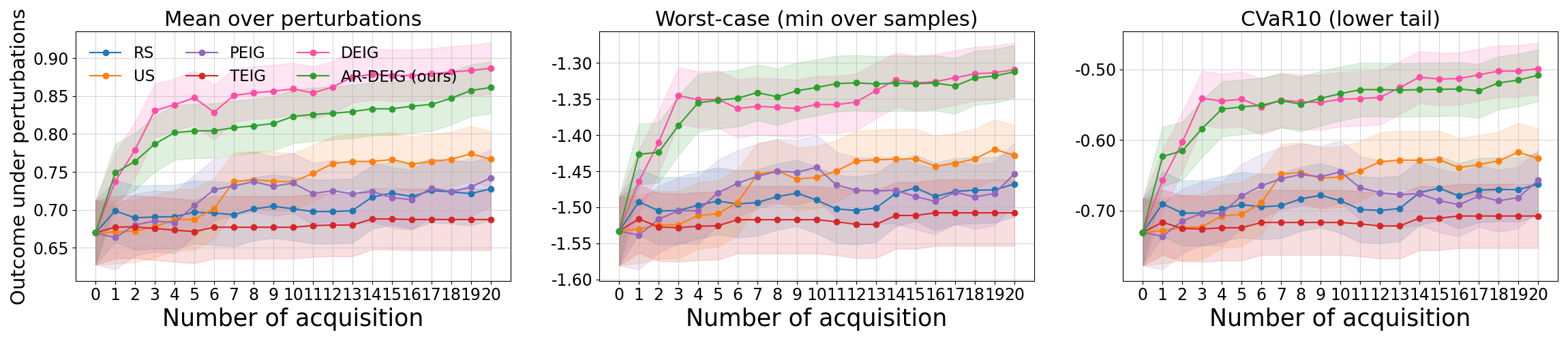}
    \caption{Adversarial Robustness Evaluation: AR-DEIG achieves the slightly better robustness across acquisition steps, compared to baselines in both average and tail-risk metrics. This is reflected by the green curves attaining higher values in the mean panel and less negative values in the worst-case and CVaR10 panels across the acquisition steps. Metrics are computed over 5,000 adversarial samples with perturbation budget $\epsilon$ = 0.1. The x-axis shows the number of acquisitions; mean (left), worst-case (middle), and CVaR10 (right).}
    \label{fig:5_suppl}
\end{figure*}
\subsubsection{Nominal Evaluations}\label{acc_entr_plots}
Accuracy and entropy are shown over the number of acquisitions for the 500 test points, illustrating how performance and uncertainty evolve as more queries are collected. Results for $\epsilon \in {0.1, 0.7}$ are presented in \cref{fig:acc_group_suppl}–\cref{fig:ent_group_suppl}.
\begin{figure*}
    \centering
\includegraphics[width=0.95\linewidth]{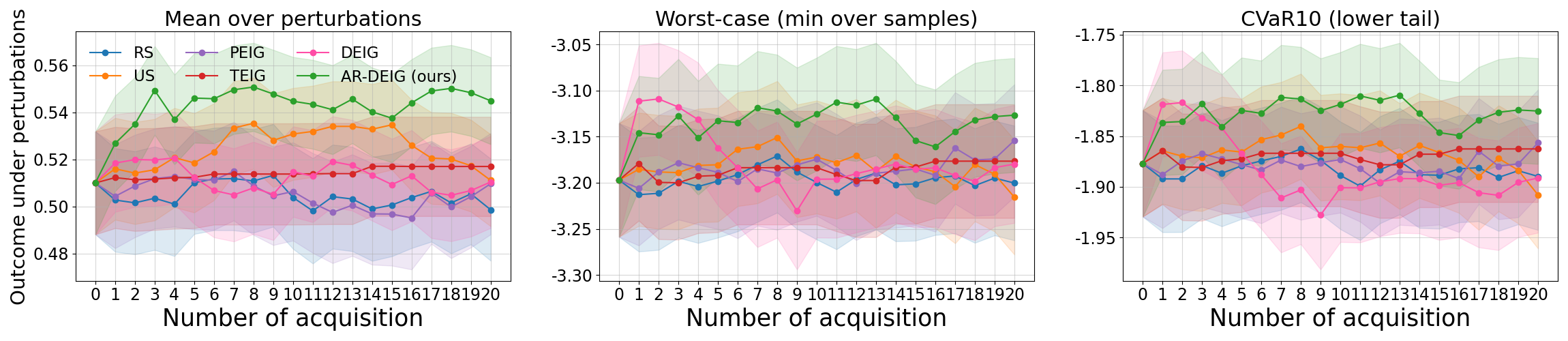}
    \caption{Adversarial Robustness Evaluation: AR-DEIG achieves the strongest robustness across acquisition steps, outperforming baselines in both average and tail-risk metrics. This is reflected by the green curves attaining higher values in the mean panel and less negative values in the worst-case and CVaR10 panels across the acquisition steps. Metrics are computed over 5,000 adversarial samples with perturbation budget $\epsilon$ = 0.7. The x-axis shows the number of acquisitions; mean (left), worst-case (middle), and CVaR10 (right).}
    \label{fig:6_suppl}
\end{figure*}
\begin{figure*}
    \begin{subfigure}{0.48\linewidth}
        \centering
        \begin{minipage}{0.48\linewidth}
            \includegraphics[width=\linewidth]{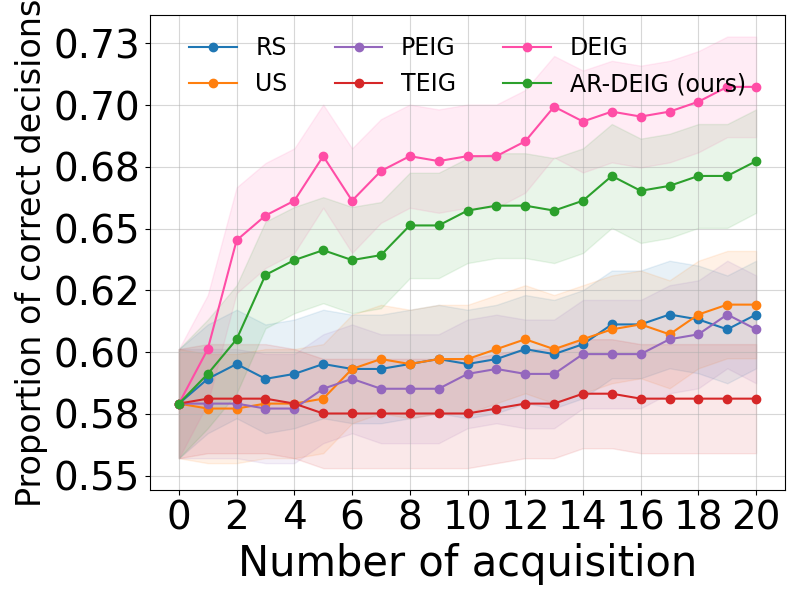}
            \caption*{$\epsilon=0.1$}
        \end{minipage}
        \hfill
        \begin{minipage}{0.48\linewidth}
            \includegraphics[width=\linewidth]{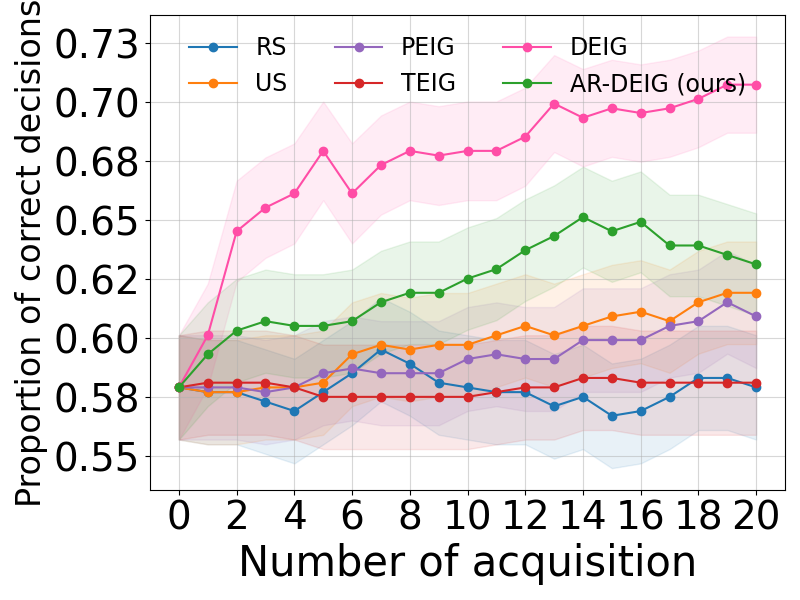}
            \caption*{$\epsilon=0.7$}
        \end{minipage}
        \caption{Accuracy}
        \label{fig:acc_group_suppl}
    \end{subfigure}
    \hfill
    \begin{subfigure}{0.48\linewidth}
        \centering
        \begin{minipage}{0.48\linewidth}
            \includegraphics[width=\linewidth]{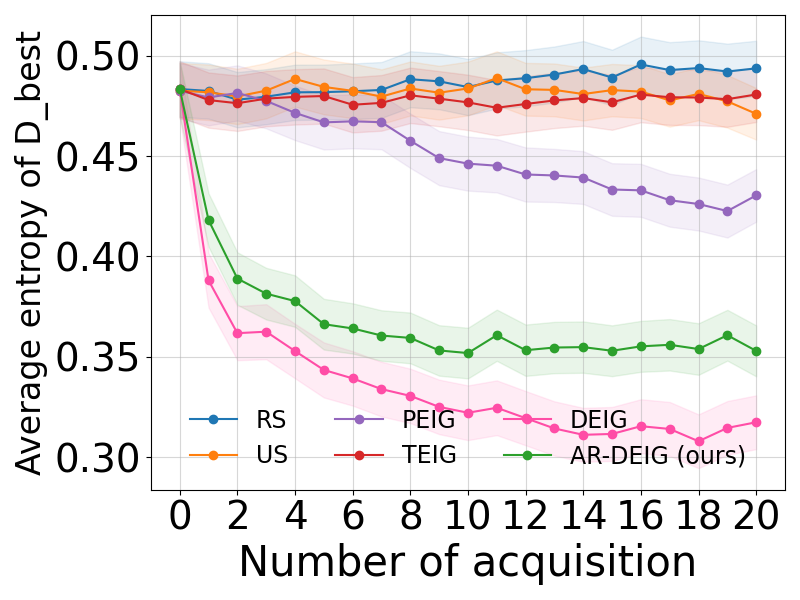}
            \caption*{$\epsilon=0.1$}
        \end{minipage}
        \hfill
        \begin{minipage}{0.48\linewidth}
            \includegraphics[width=\linewidth]{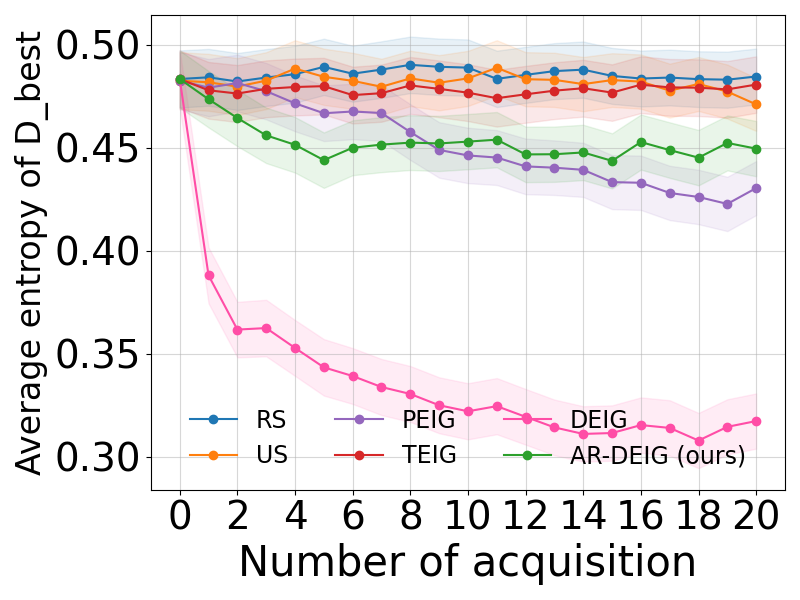}
            \caption*{$\epsilon=0.7$}
        \end{minipage}
        \caption{Entropy}
        \label{fig:ent_group_suppl}
    \end{subfigure}
    \caption{Nominal Evaluation: DEIG achieves the best performance, attaining higher accuracy and significantly lower predictive entropy across acquisition steps, while AR-DEIG remains competitive. This is evident as the pink curves dominate in the accuracy panels (left) and consistently achieve the lowest values in the entropy panels (right), with the green curves (AR-DEIG) generally tracking closely behind leading methods. Results are computed on 500 test points without adversarial perturbations, with acquisition performed under budgets $\epsilon=0.1$ and $\epsilon=0.7$. The x-axis denotes the number of acquisitions; panels show accuracy (a) and entropy (b), each comparing the two perturbation levels.}
    \label{fig:othereps}
\end{figure*}
\subsubsection{Decision Flip Rate}\label{dfr}
The decision flip rates for $\epsilon = 0.1$ and $\epsilon = 0.7$ are presented in \cref{fig:0.1flip} and \cref{fig:0.7flip}, respectively. AR-DEIG consistently achieves lower flip rates than DEIG, particularly during the early and intermediate stages of acquisition. In contrast, DEIG exhibits markedly higher instability, frequently altering its preferred decision as new observations are incorporated.
\begin{figure}[t]
    \begin{subfigure}{0.48\linewidth}
        \includegraphics[width=\linewidth]
     {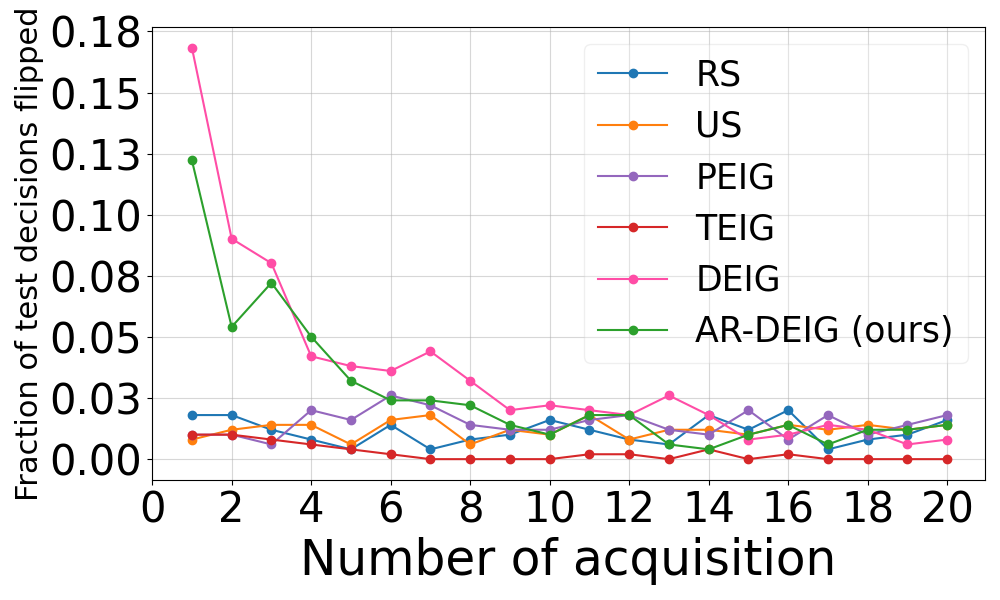}
        \caption{$\epsilon$ = 0.1}
        \label{fig:0.1flip}
    \end{subfigure}
    \hfill
    \begin{subfigure}{0.48\linewidth}
        \includegraphics[width=\linewidth]{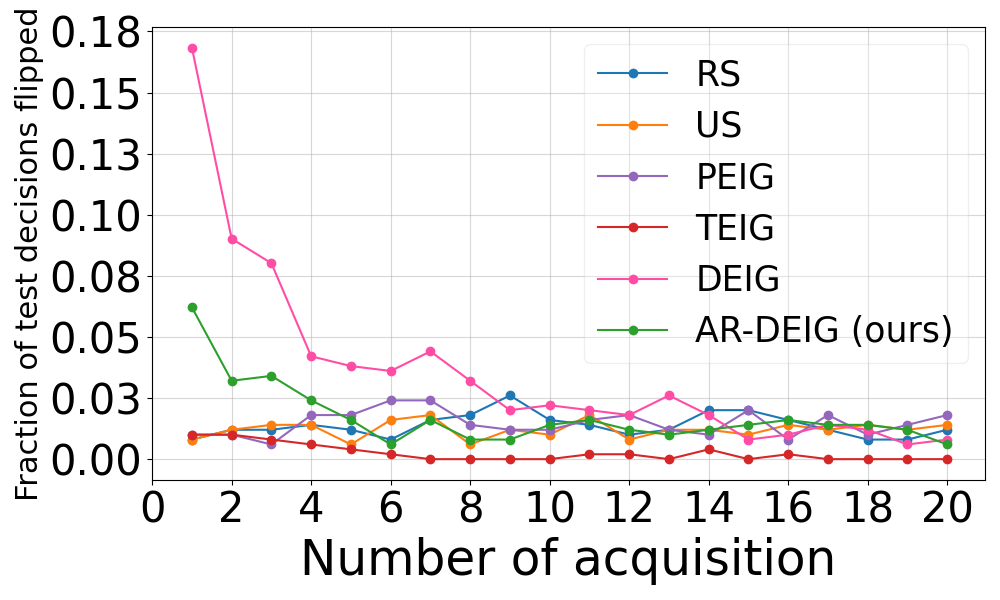}
        \caption{$\epsilon$ = 0.7}
        \label{fig:0.7flip}
    \end{subfigure}
    \caption{Decision flips over acquisitions with $\epsilon=0.1,0.7$: Higher flip rates indicate frequent reassignment of test instances to different decision curves, reflecting instability and correlating with degraded adversarial performance.}
    \label{fig:6flips}
\end{figure}
\begin{figure*}
    \begin{subfigure}{0.48\linewidth}
        \centering
        \begin{minipage}{0.48\linewidth}
            \includegraphics[width=\linewidth]{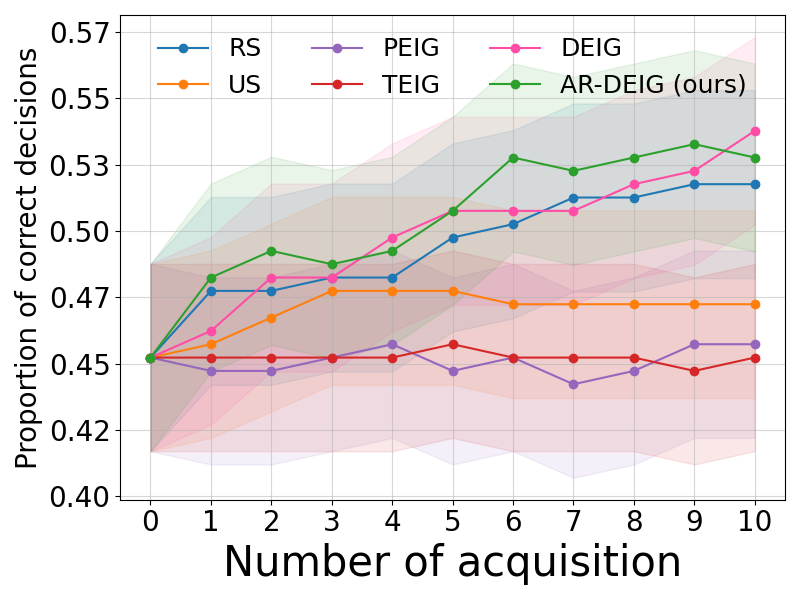}
            \caption*{$\epsilon=0.5$}
        \end{minipage}
        \hfill
        \begin{minipage}{0.48\linewidth}
            \includegraphics[width=\linewidth]{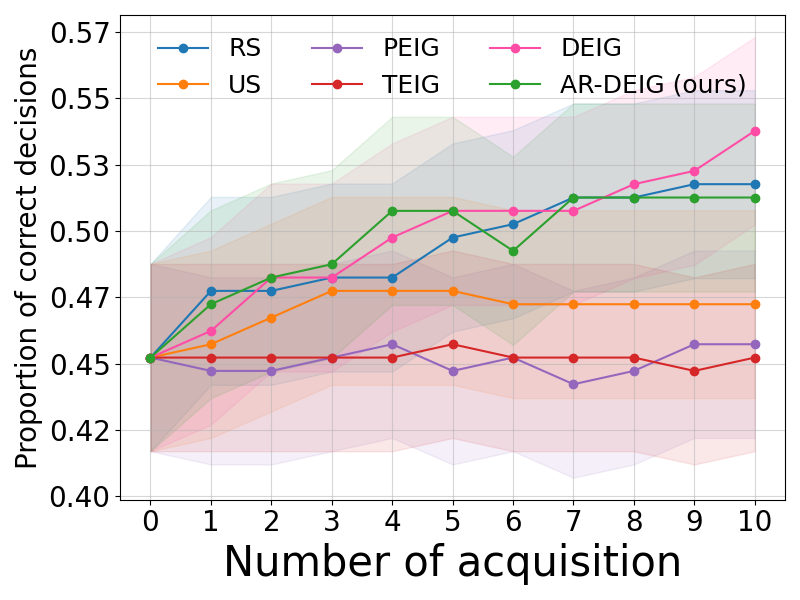}
            \caption*{$\epsilon=0.7$}
        \end{minipage}
        \caption{Accuracy}
        \label{fig:acc_group_supploai}
    \end{subfigure}
    \hfill
    \begin{subfigure}{0.48\linewidth}
        \centering
        \begin{minipage}{0.48\linewidth}
            \includegraphics[width=\linewidth]{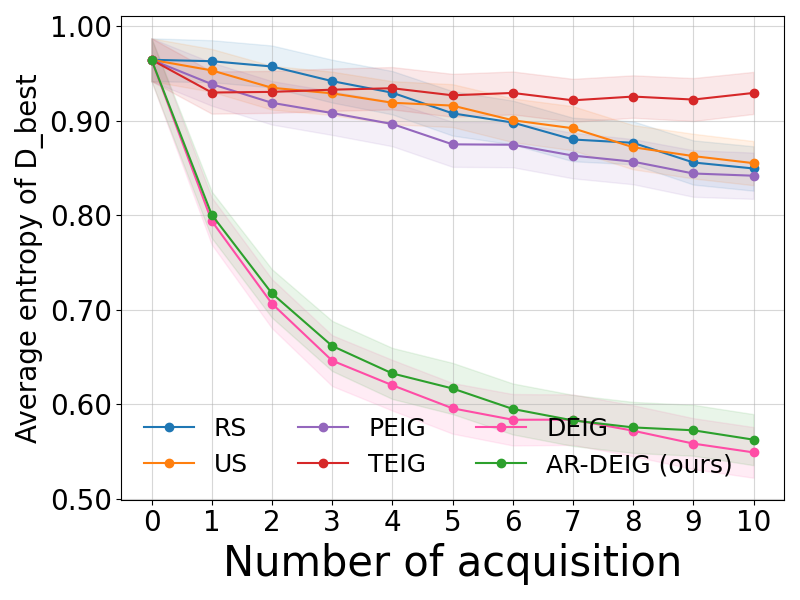}
            \caption*{$\epsilon=0.5$}
        \end{minipage}
        \hfill
        \begin{minipage}{0.48\linewidth}
            \includegraphics[width=\linewidth]{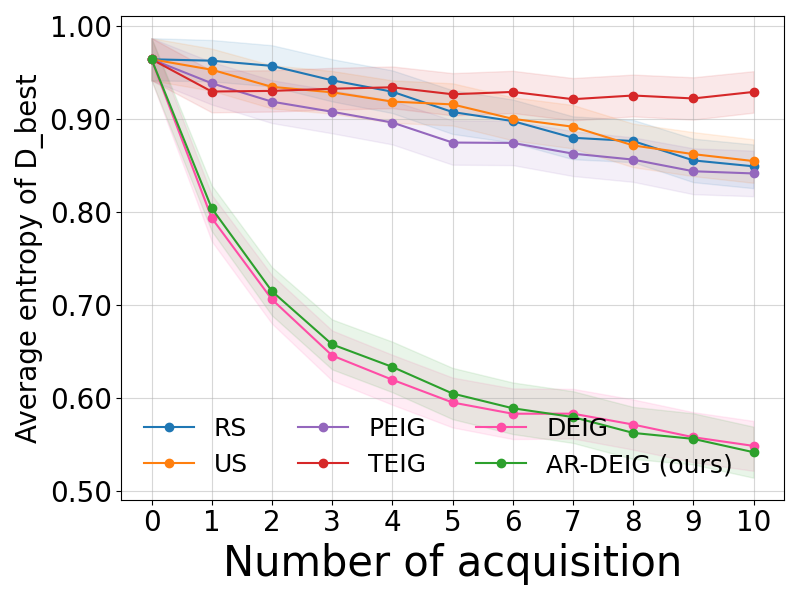}
            \caption*{$\epsilon=0.7$}
        \end{minipage}
        \caption{Entropy}
        \label{fig:ent_group_supploai}
    \end{subfigure}
    \caption{Nominal evaluation on the OAI dataset: (a) accuracy and (b) entropy of the posterior over decisions versus number of acquisitions (200 test points). AR-DEIG ($\epsilon=0.5, 0.7$) remains competitive with baselines, demonstrating that robustness-aware acquisition does not degrade real-world performance.}
    \label{fig:oai_org_su}
\end{figure*}

\subsection{Additional details on Knee Osteoarthritis Dataset and analysis}\label{realdata}
We utilize data from the Osteoarthritis Initiative (OAI)\footnote{\url{https://nda.nih.gov/oai/}}, a large-scale longitudinal study of knee osteoarthritis. We restrict our analysis to symptomatic individuals, defined as subjects with a total WOMAC score greater than 9. From this population, we further select subjects exhibiting early-stage or doubtful radiographic osteoarthritis at baseline, according to the Kellgren--Lawrence (KL) grading system (typically KL grades 1--2).

Disease progression is quantified using changes in joint space width (JSW), a widely used imaging biomarker for cartilage loss. Following established clinical thresholds, a decrease in JSW greater than $0.7\,\mathrm{mm}$ over follow-up is considered indicative of structural progression. JSW measurements are obtained from standardized knee radiographs at a fixed location ($x = 0.25$), thereby focusing on the medial compartment, where osteoarthritis progression is most commonly observed.

All variables are defined at the knee level. The feature set extracted from the OAI dataset includes:
\begin{itemize}
    \item Age at baseline;
    \item Sex;
    \item Body mass index (BMI);
    \item Total WOMAC score;
    \item Knee alignment (varus, valgus, or neutral);
    \item History of knee injury (binary indicator);
    \item History of knee surgery (binary indicator);
    \item Kellgren--Lawrence (KL) grade;
    \item Joint space width (JSW) at $x = 0.25$.
\end{itemize}

These variables capture a combination of demographic, clinical, and radiographic factors commonly associated with osteoarthritis onset and progression.

\subsubsection{Experiment details about Knee Osteoarthritis Dataset}
We model body mass index (BMI) as an adversarial variable (total 9 variables) and apply AR-DEIG with perturbation magnitude $\epsilon = 0.5, 0.7, 20.0$. \cref{fig:oai_org_su} shows the accuracy and entropy of 200 test data and there is no significant difference in performance between the baseline DEIG and AR-DEIG. In terms of accuracy, AR-DEIG slightly outperforms DEIG in later stages, although differences remain within overlapping confidence regions. For entropy, both methods exhibit similar rates of uncertainty reduction, indicating comparable efficiency in information acquisition. These results demonstrate that incorporating adversarial robustness does not degrade performance, while improving stability under non-smooth decision landscapes typical of real-world deployment. \cref{fig:oaiworstcase_20.0} showcase the performance when we use a large $\epsilon$ value of $20.0$.


\begin{figure*}
    \centering
    \includegraphics[width=0.95\linewidth]
 {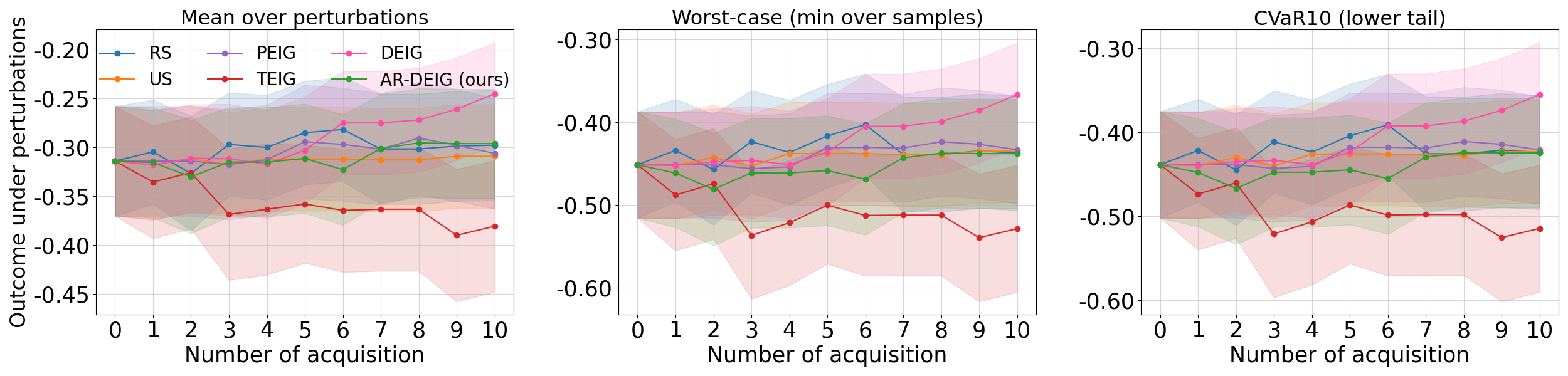}
    \caption{Adversarial Robustness Evaluation: At the higher perturbation budget $\epsilon = 20.0$, AR-DEIG demonstrates degraded robustness across acquisition steps, underperforming relative to several baselines in both average and tail-risk metrics. This is reflected by the green curves attaining lower values in the mean panel and consistently worse outcomes in the worst-case and CVaR10 panels. Results are computed over 500 adversarial samples for 200 test data. Panels show mean (left), worst-case (middle), and CVaR10 (right).}
    \label{fig:oaiworstcase_20.0}
\end{figure*}

\subsubsection{Outcome Model for Decision-Dependent Outcomes}

The linear outcome model is used to estimate the conditional outcome
\( \mathbb{E}[Y \mid X, d] \) and is evaluated for all actions by explicitly
enumerating \( d \in \{1, \dots, K\} \). This enables computation of
decision-specific outcomes without relying on a decision model.

Given a set of evaluation points \( x \), we compute predicted outcomes for \emph{all} possible actions \( d \in \{1, \dots, K\} \) using the outcome model. This yields a matrix of potential outcomes, enabling evaluation of both observed and counterfactual decisions.

To assess robustness, we generate perturbed inputs by adding bounded noise to selected covariate dimensions. For each perturbed point, we evaluate predicted outcomes across all actions using the outcome model. This provides a controlled way to study how small input perturbations affect decision-dependent outcomes.

\paragraph{Linear Outcome Model}
We model the conditional mean outcome as a linear function of covariates, actions, and their interactions. Specifically, for each observation with covariates \( x \in \mathbb{R}^p \) and action \( d \in \{1, \dots, K\} \), we construct a feature vector that includes:
(i) the covariates \( x \), 
(ii) a one-hot encoding of the action \( d \), and 
(iii) interaction terms between \( x \) and each action indicator. Let \( D_k = \mathbf{1}\{d = k\} \) denote the action indicators.

This yields the following functional form:
\begin{equation}
\mathbb{E}[Y \mid X = x, d = k] 
= \beta^\top x + \gamma_k + x^\top \theta_k,
\end{equation}
where \( \beta \in \mathbb{R}^p \) captures effects shared across actions, \( \gamma_k \) is the coefficient associated with action \( k \) (i.e., the effect of \( D_k \)), and \( \theta_k \in \mathbb{R}^p \) captures action-specific heterogeneous effects.

Let \( Z \) denote the resulting design matrix constructed from these components. Model parameters are estimated via ridge-regularized least squares:
\begin{equation}
\hat{w} = (Z^\top Z + \lambda I)^{-1} Z^\top y.
\end{equation}

This specification allows the model to share statistical strength across actions while retaining flexibility to model action-dependent outcome responses. At prediction time, the model can be evaluated for any action \( k \) by constructing the corresponding feature representation, enabling computation of outcomes for all possible decisions.

\subsection{Visualization of 1d Dataset Queries in Active Learning}\label{acq_steps}
The figures illustrate the acquisition process and selected query points for a representative test instance, comparing the baseline \texttt{DEIG} method with our proposed approach, \texttt{AR-DEIG}.

Red stars denote the query points selected during each acquisition step and the black vertical line is the point decision-maker has to make a decision. The dataset comprises three decisions, with outcomes color-coded as follows: blue for decision 1, orange for decision 2, and green for decision 3. In real-world applications, these decisions correspond to alternative treatment plans.

The bar chart at the top of each plot shows the posterior probability over decisions after incorporating the selected query point and updating the model.
\subsubsection{DEIG based Querying over Acquisition steps}
\cref{fig:acq_deig_1}--\cref{fig:acq_deig_10} correspond to the decision-based EIG approach. Notably, this method exhibits over-exploitation, repeatedly selecting query points near local optima while failing to adequately explore surrounding brittle regions.
\subsubsection{AR-DEIG based Querying over Acquisition steps}
\cref{fig:acq_ardeig_1}--\cref{fig:acq_ardeig_10} correspond to the decision-based EIG approach.
Our method exhibits a more exploratory acquisition behavior, which facilitates improved identification of robust optima over successive iterations.
\begin{figure*}
    \centering
    \includegraphics[width=0.95\linewidth]{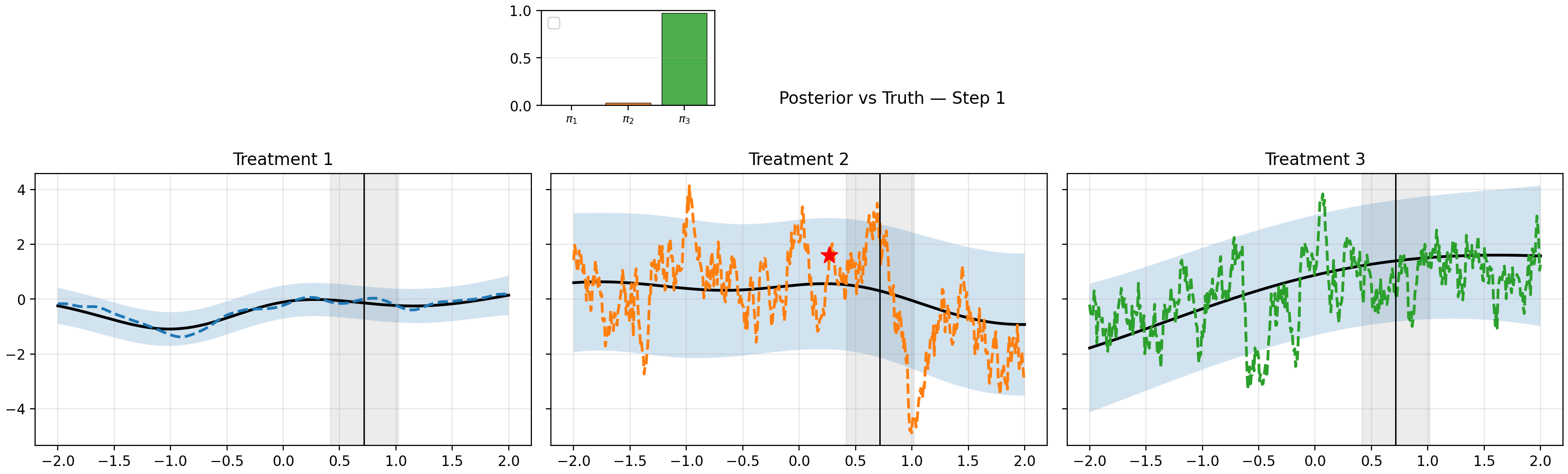}
    \caption{Decision-EIG: Acquisition step 1 (red star indicates the query point and black vertical line is the new point the decision-maker has to make a decision).}
    \label{fig:acq_deig_1}
\end{figure*}
\begin{figure*}
    \centering
    \includegraphics[width=0.95\linewidth]{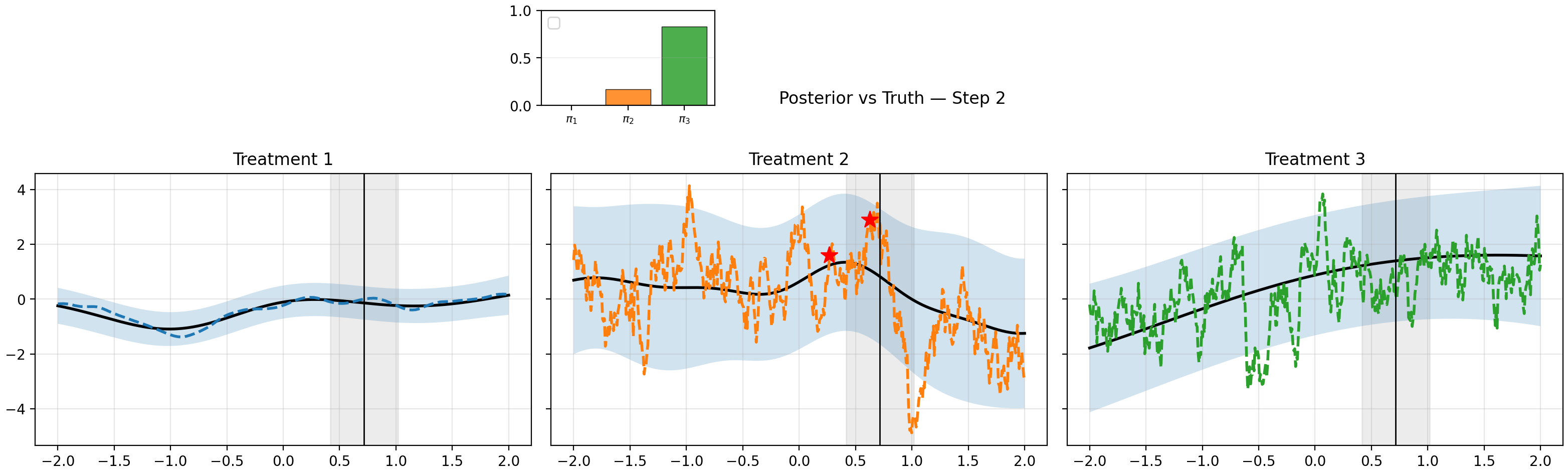}
    \caption{Decision-EIG: Acquisition step 2 (red star indicates the query point and black vertical line is the new point the decision-maker has to make a decision).}
    \label{fig:acq_deig_2}
\end{figure*}
\begin{figure*}
    \centering
    \includegraphics[width=0.95\linewidth]{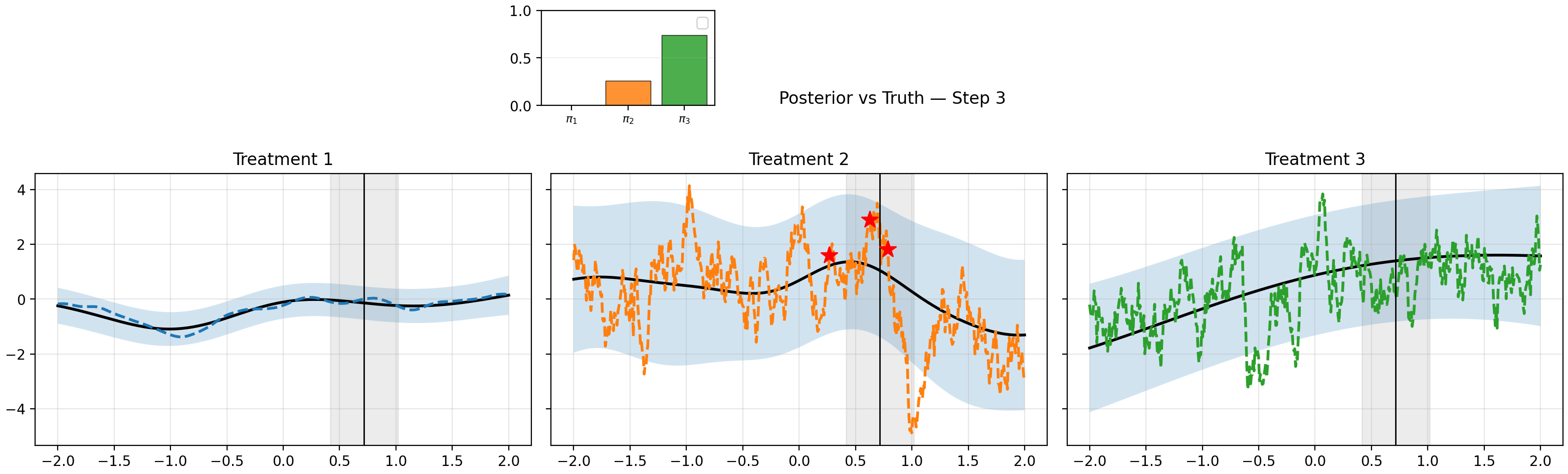}
    \caption{Decision-EIG: Acquisition step 3 (red star indicates the query point and black vertical line is the new point the decision-maker has to make a decision).}
    \label{fig:acq_deig_3}
\end{figure*}
\begin{figure*}
    \centering
    \includegraphics[width=0.95\linewidth]{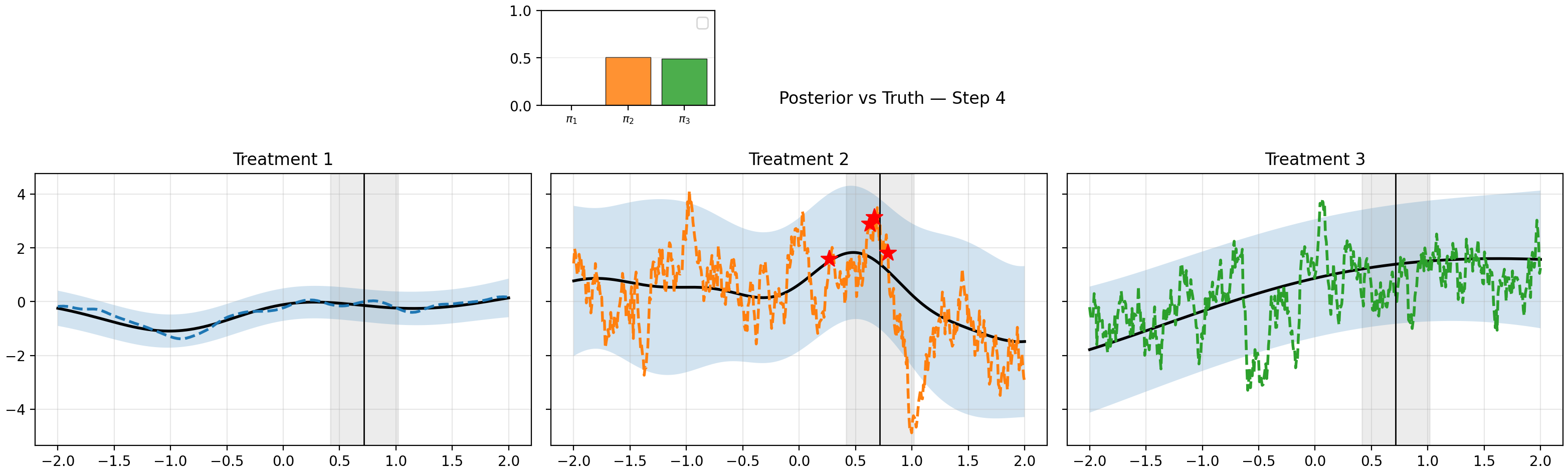}
    \caption{Decision-EIG: Acquisition step 4 (red star indicates the query point and black vertical line is the new point the decision-maker has to make a decision).}
    \label{fig:acq_deig_4}
\end{figure*}
\begin{figure*}
    \centering
    \includegraphics[width=0.95\linewidth]{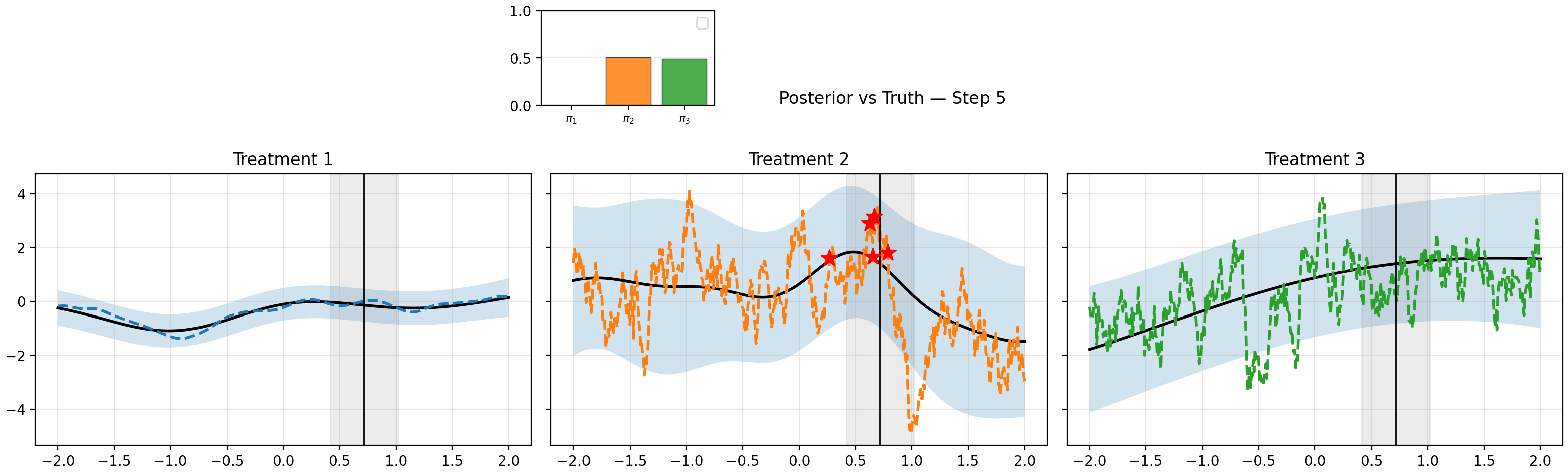}
    \caption{Decision-EIG: Acquisition step 5 (red star indicates the query point and black vertical line is the new point the decision-maker has to make a decision).}
    \label{fig:acq_deig_5}
\end{figure*}
\begin{figure*}
    \centering
    \includegraphics[width=0.95\linewidth]{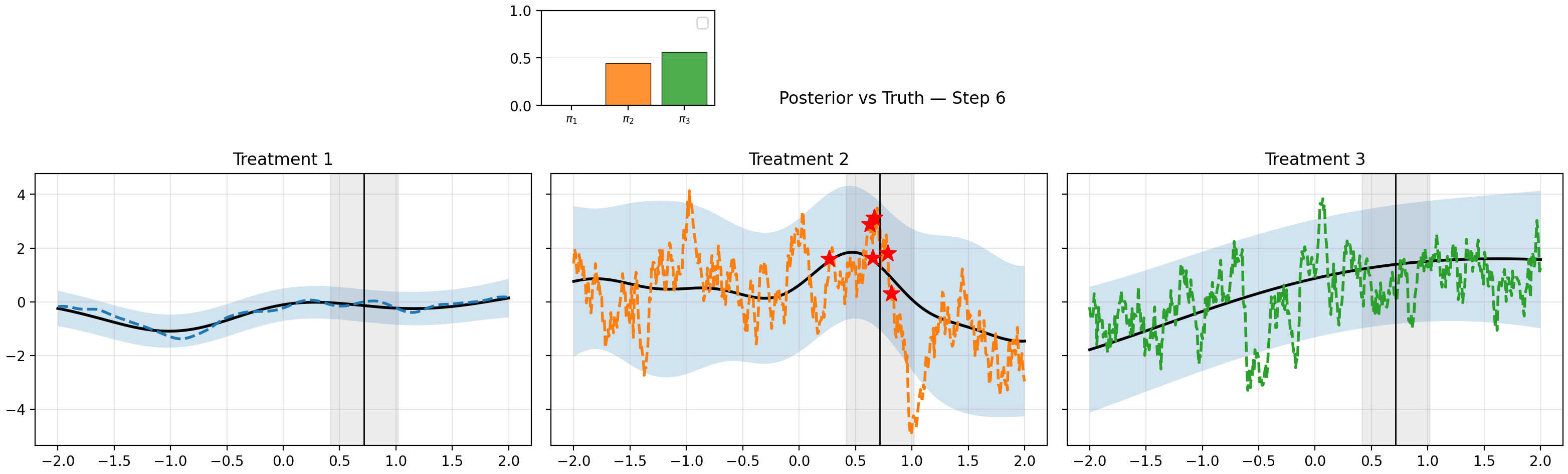}
    \caption{Decision-EIG: Acquisition step 6 (red star indicates the query point and black vertical line is the new point the decision-maker has to make a decision).}
    \label{fig:acq_deig_6}
\end{figure*}
\begin{figure*}
    \centering
    \includegraphics[width=0.95\linewidth]{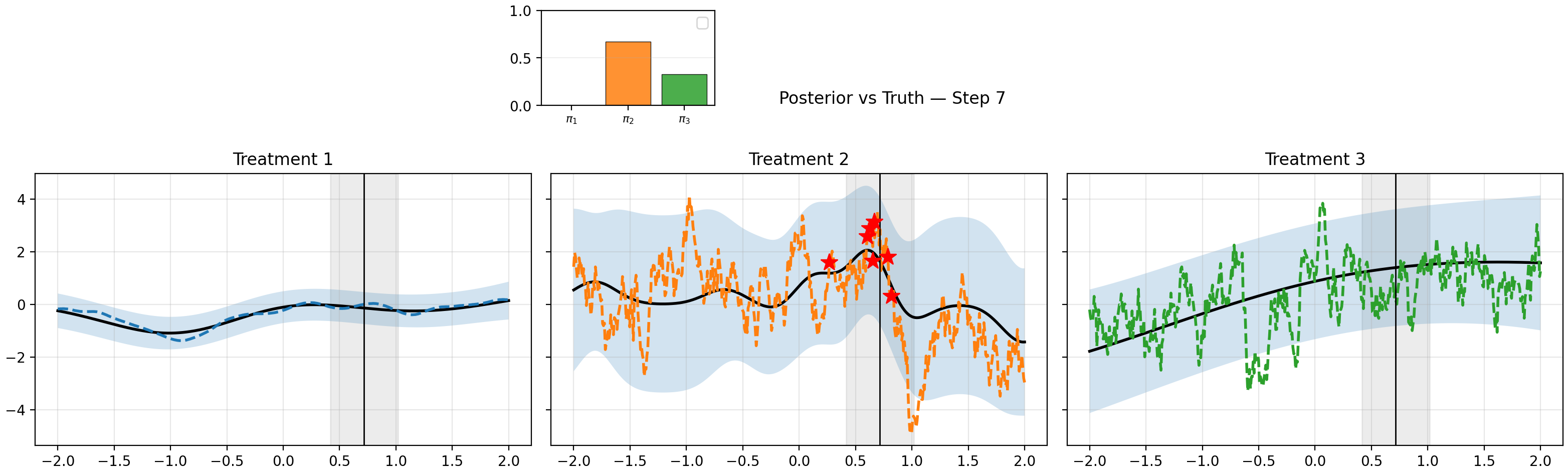}
    \caption{Decision-EIG: Acquisition step 7 (red star indicates the query point and black vertical line is the new point the decision-maker has to make a decision).}
    \label{fig:acq_deig_7}
\end{figure*}
\begin{figure*}
    \centering
    \includegraphics[width=0.95\linewidth]{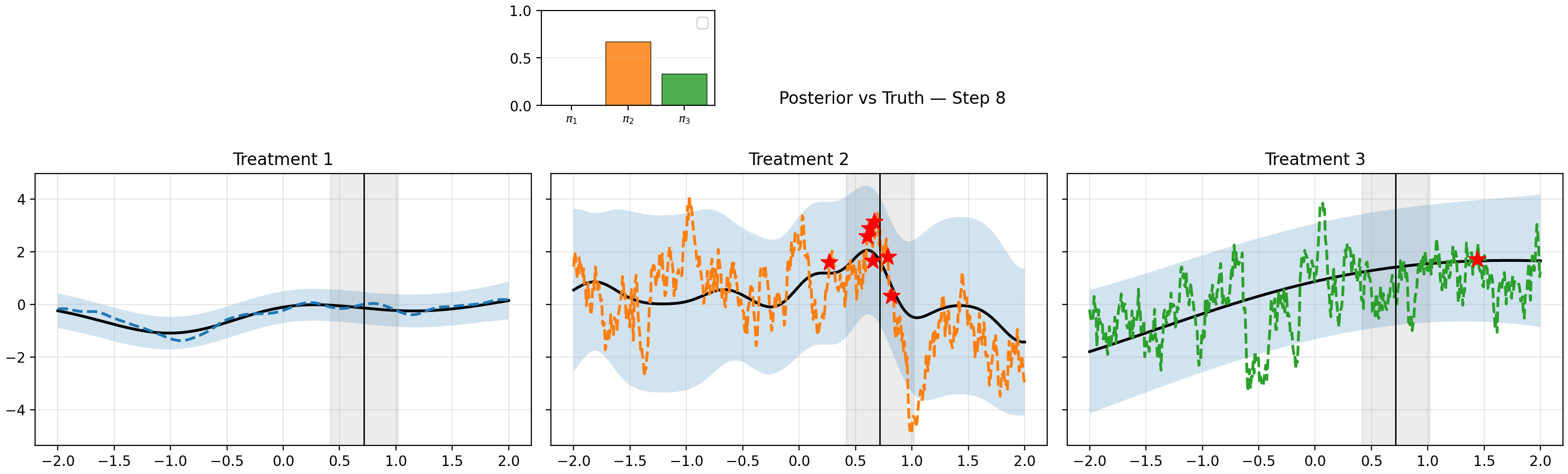}
    \caption{Decision-EIG: Acquisition step 8 (red star indicates the query point and black vertical line is the new point the decision-maker has to make a decision).}
    \label{fig:acq_deig_8}
\end{figure*}
\begin{figure*}
    \centering
    \includegraphics[width=0.95\linewidth]{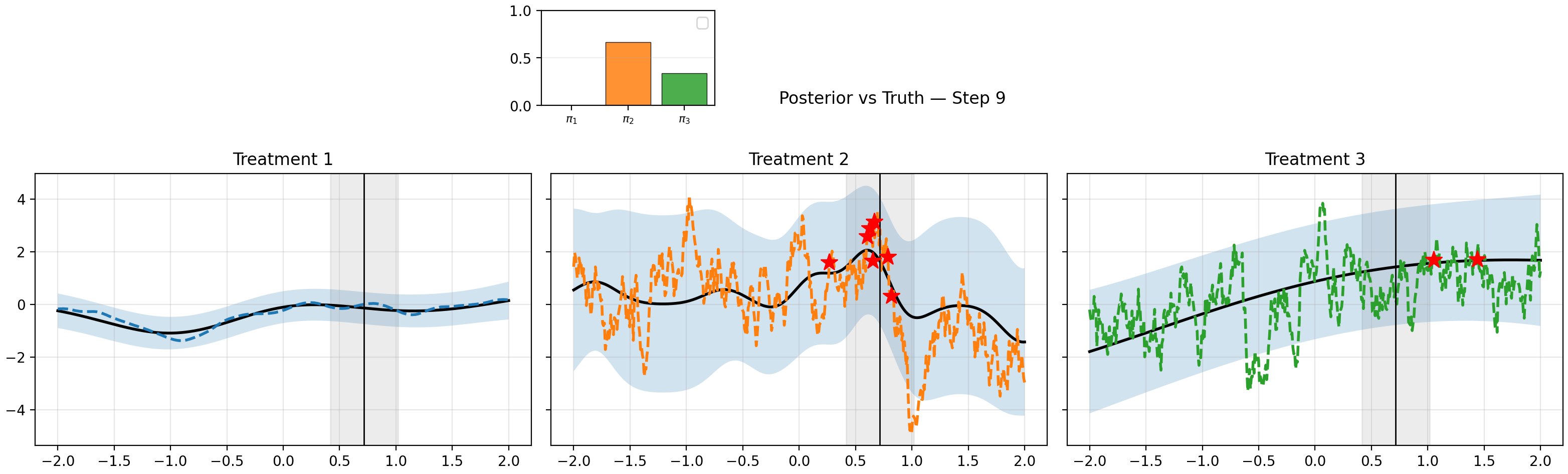}
    \caption{Decision-EIG: Acquisition step 9 (red star indicates the query point and black vertical line is the new point the decision-maker has to make a decision).}
    \label{fig:acq_deig_9}
\end{figure*}
\begin{figure*}
    \centering
    \includegraphics[width=0.95\linewidth]{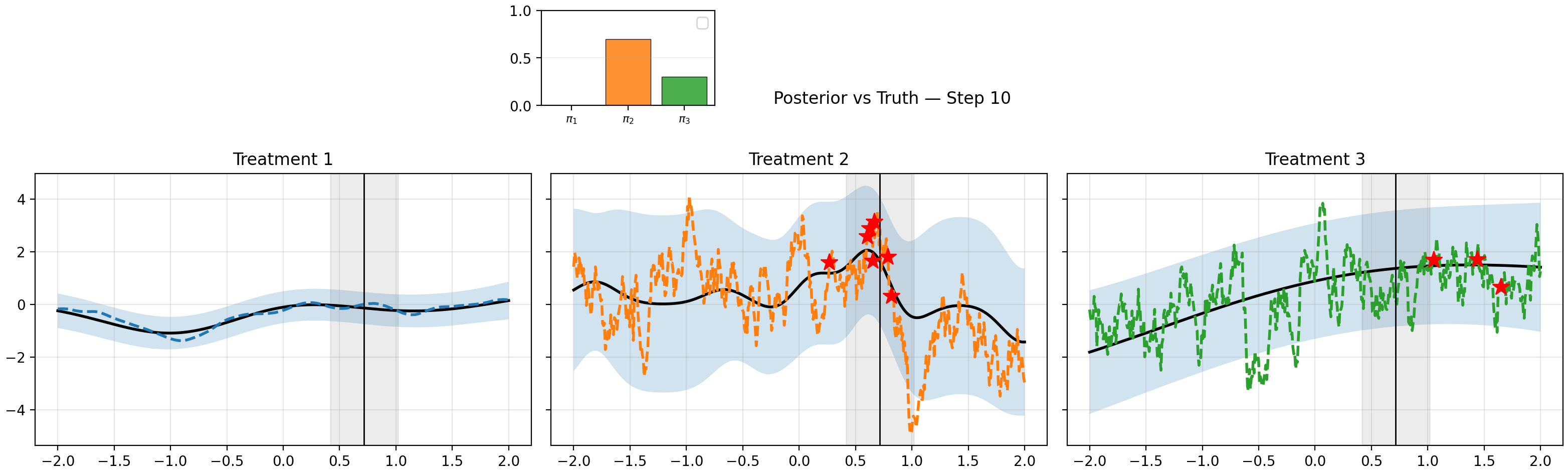}
    \caption{Decision-EIG: Acquisition step 10 (red star indicates the query point and black vertical line is the new point the decision-maker has to make a decision).}
    \label{fig:acq_deig_10}
\end{figure*}
\begin{figure*}
    \centering
    \includegraphics[width=0.95\linewidth]{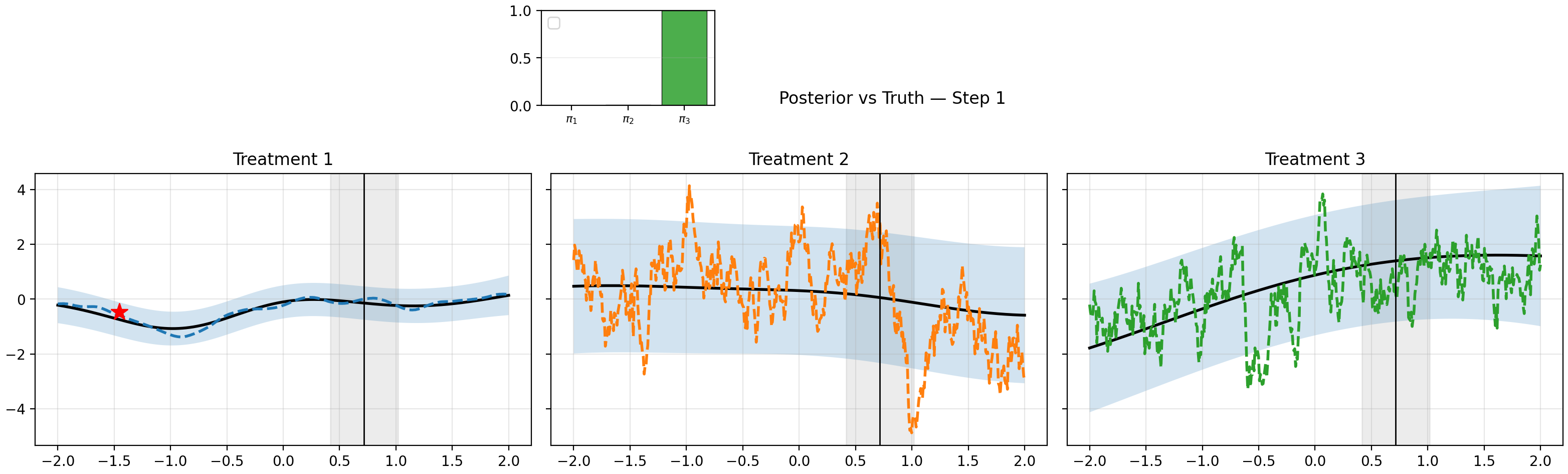}
    \caption{AR-DEIG: Acquisition step 1 (red star indicates the query point and black vertical line is the new point the decision-maker has to make a decision).}
    \label{fig:acq_ardeig_1}
\end{figure*}
\begin{figure*}
    \centering
    \includegraphics[width=0.95\linewidth]{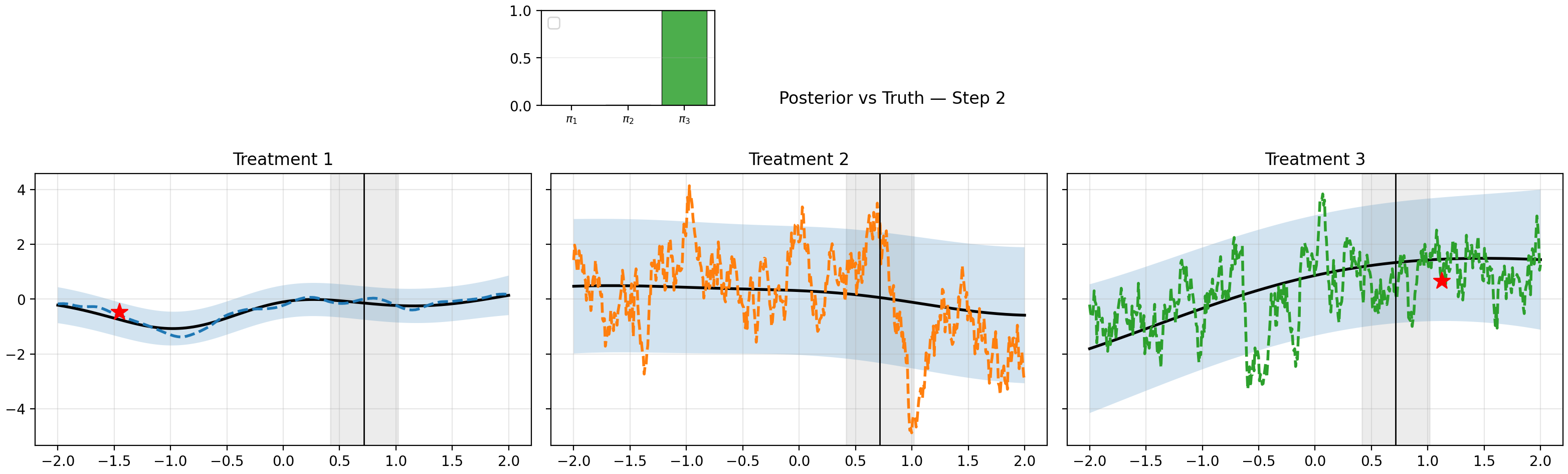}
    \caption{AR-DEIG: Acquisition step 2 (red star indicates the query point and black vertical line is the new point the decision-maker has to make a decision).}
    \label{fig:acq_ardeig_2}
\end{figure*}
\begin{figure*}
    \centering
    \includegraphics[width=0.95\linewidth]{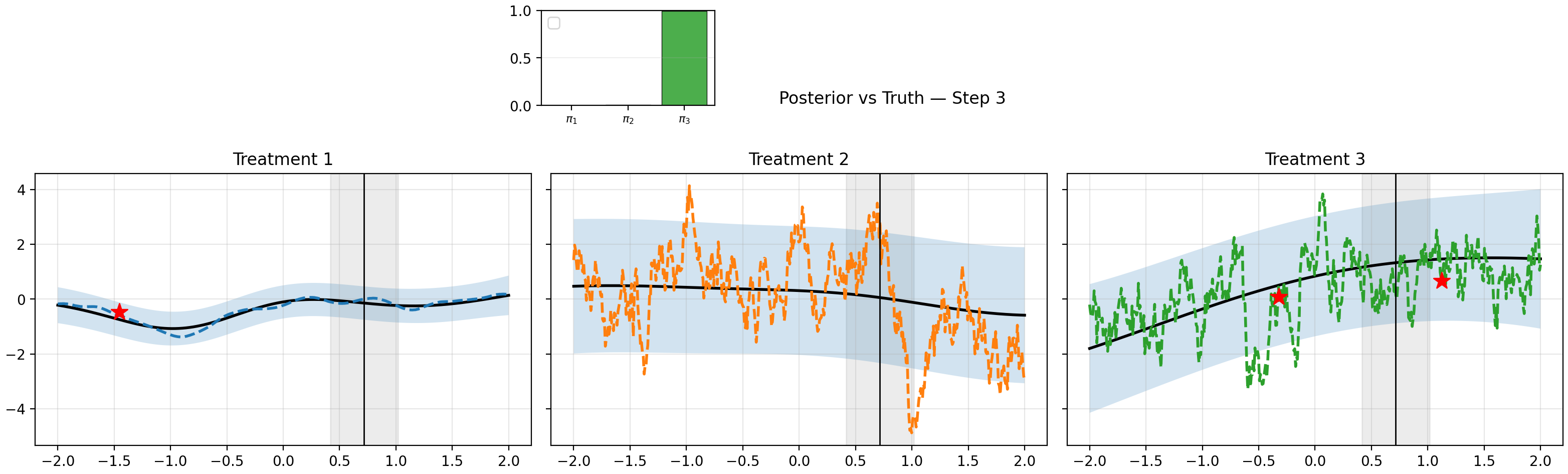}
    \caption{AR-DEIG: Acquisition step 3 (red star indicates the query point and black vertical line is the new point the decision-maker has to make a decision).}
    \label{fig:acq_ardeig_3}
\end{figure*}
\begin{figure*}
    \centering
    \includegraphics[width=0.95\linewidth]{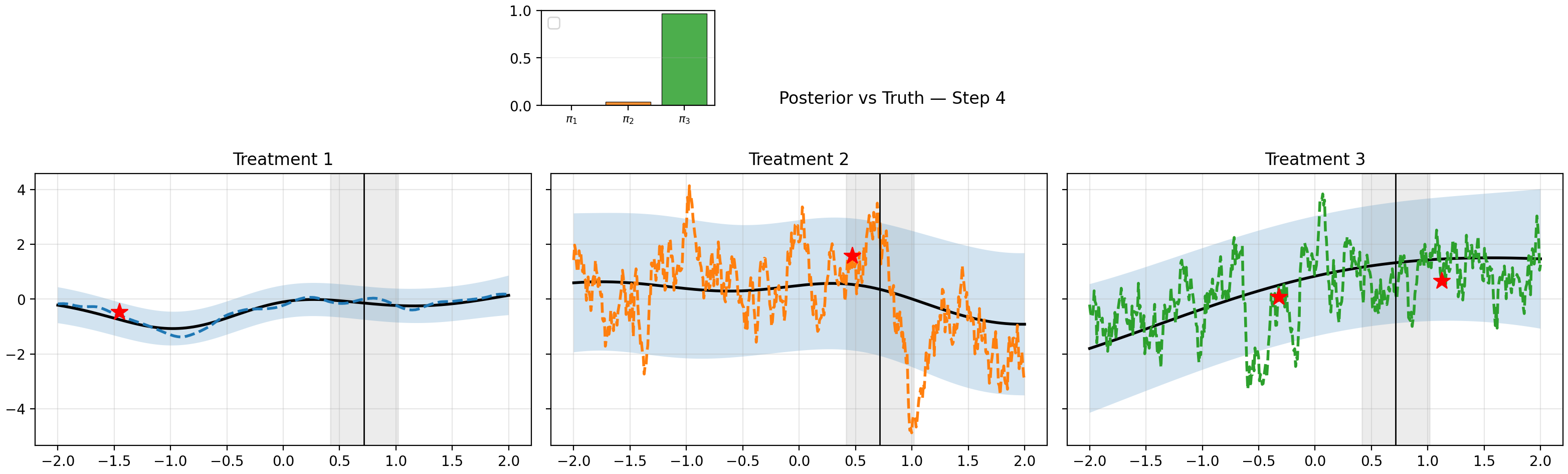}
    \caption{AR-DEIG: Acquisition step 4 (red star indicates the query point and black vertical line is the new point the decision-maker has to make a decision).}
    \label{fig:acq_ardeig_4}
\end{figure*}
\begin{figure*}
    \centering
    \includegraphics[width=0.95\linewidth]{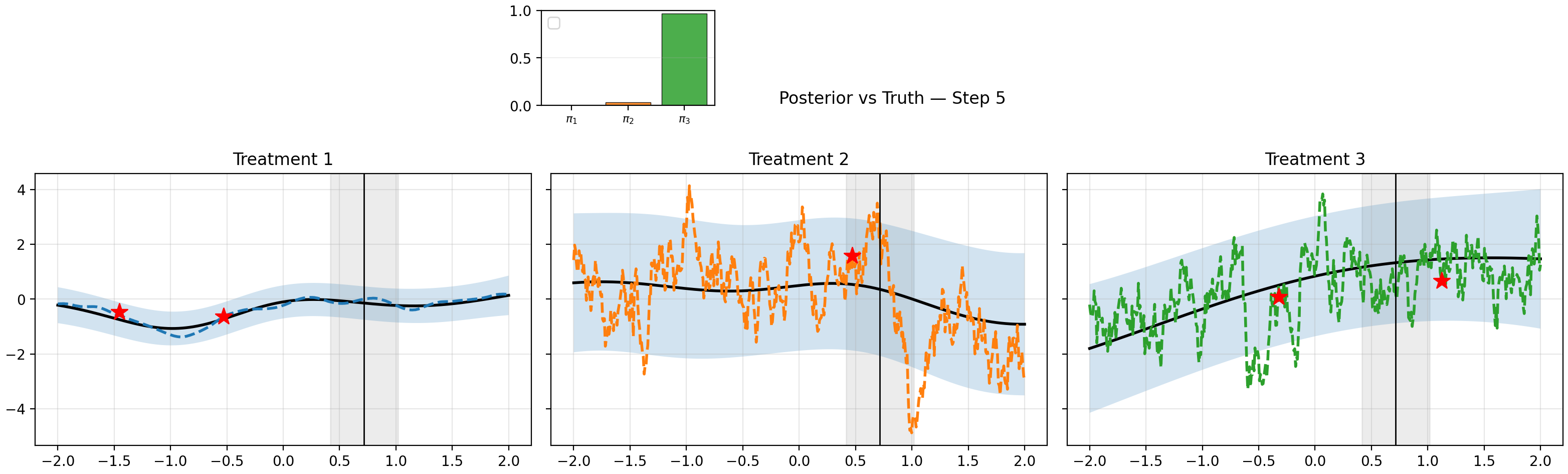}
    \caption{AR-DEIG: Acquisition step 5 (red star indicates the query point and black vertical line is the new point the decision-maker has to make a decision).}
    \label{fig:acq_ardeig_5}
\end{figure*}
\begin{figure*}
    \centering
    \includegraphics[width=0.95\linewidth]{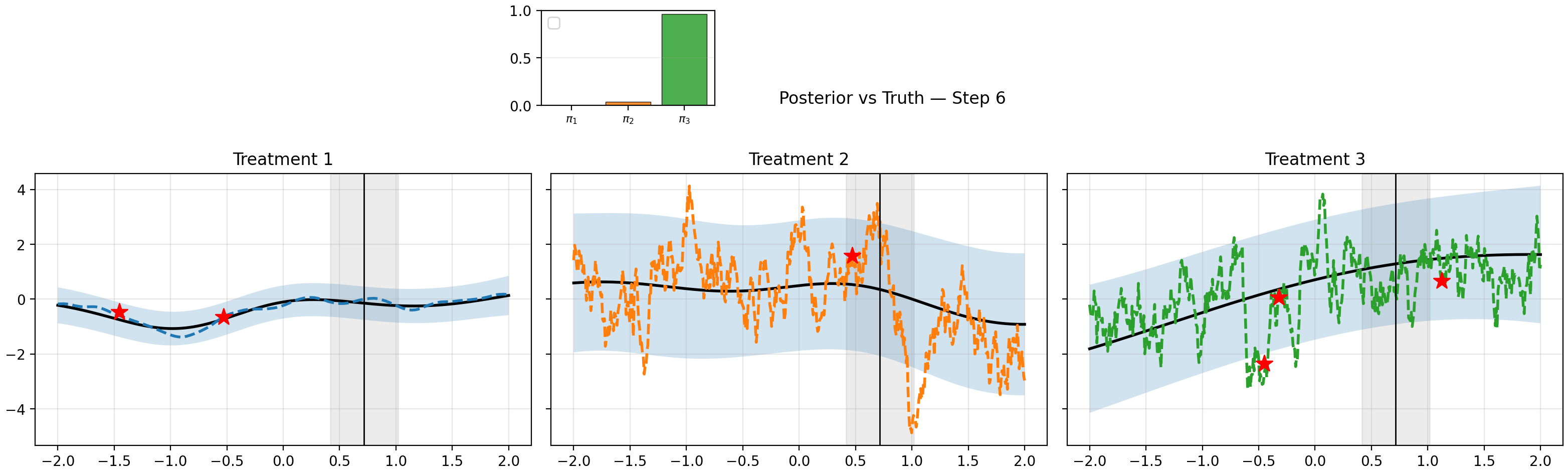}
    \caption{AR-DEIG: Acquisition step 6 (red star indicates the query point and black vertical line is the new point the decision-maker has to make a decision).}
    \label{fig:acq_ardeig_6}
\end{figure*}
\begin{figure*}
    \centering
    \includegraphics[width=0.95\linewidth]{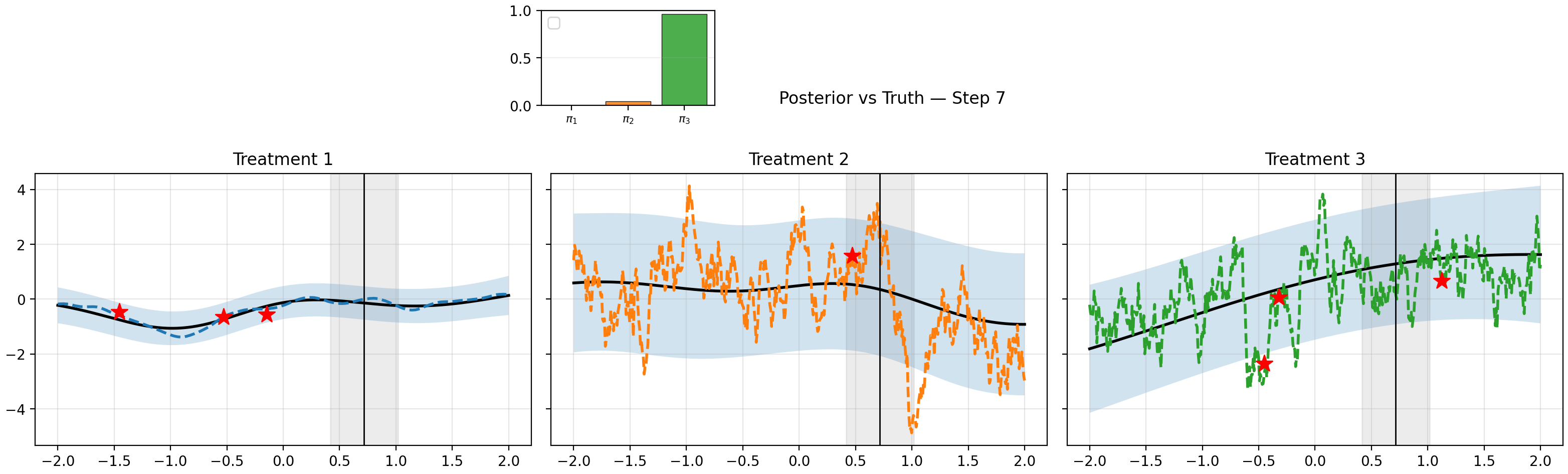}
    \caption{AR-DEIG: Acquisition step 7 (red star indicates the query point and black vertical line is the new point the decision-maker has to make a decision).}
    \label{fig:acq_ardeig_7}
\end{figure*}
\begin{figure*}
    \centering
    \includegraphics[width=0.95\linewidth]{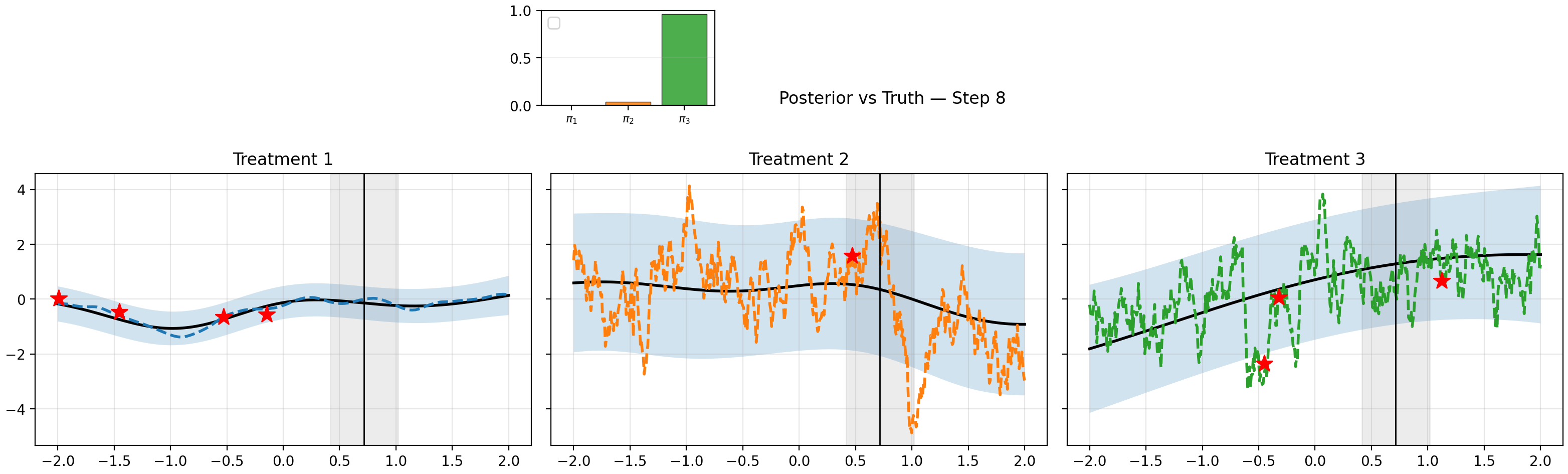}
    \caption{AR-DEIG: Acquisition step 8 (red star indicates the query point and black vertical line is the new point the decision-maker has to make a decision).}
    \label{fig:acq_ardeig_8}
\end{figure*}
\begin{figure*}
    \centering
    \includegraphics[width=0.95\linewidth]{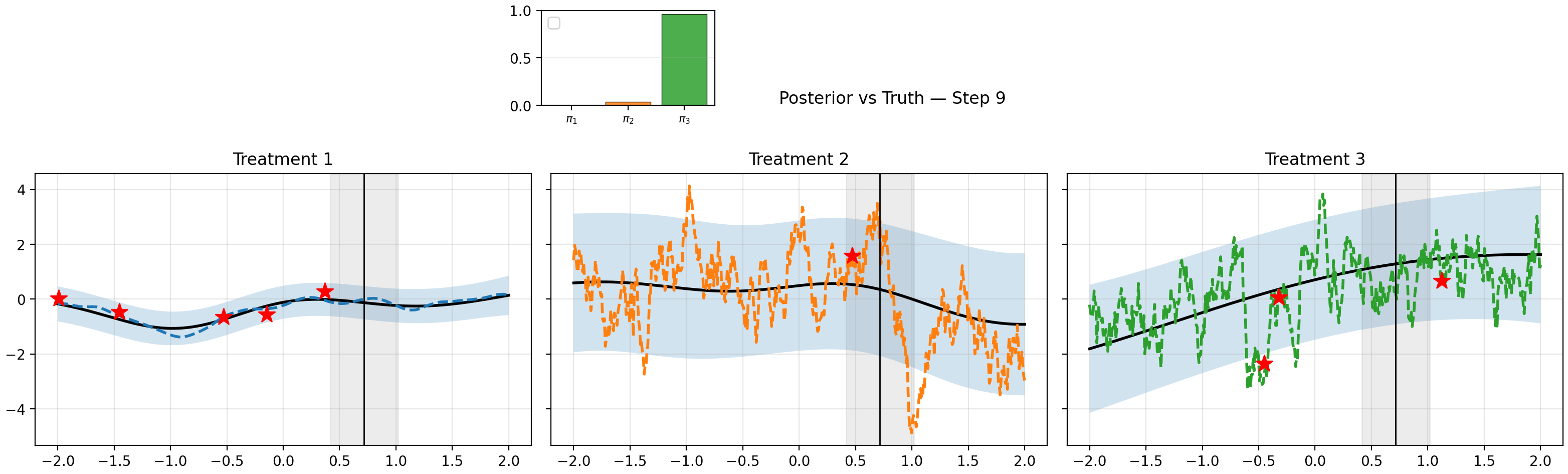}
    \caption{AR-DEIG: Acquisition step 9 (red star indicates the query point and black vertical line is the new point the decision-maker has to make a decision).}
    \label{fig:acq_ardeig_9}
\end{figure*}
\begin{figure*}
    \centering
    \includegraphics[width=0.95\linewidth]{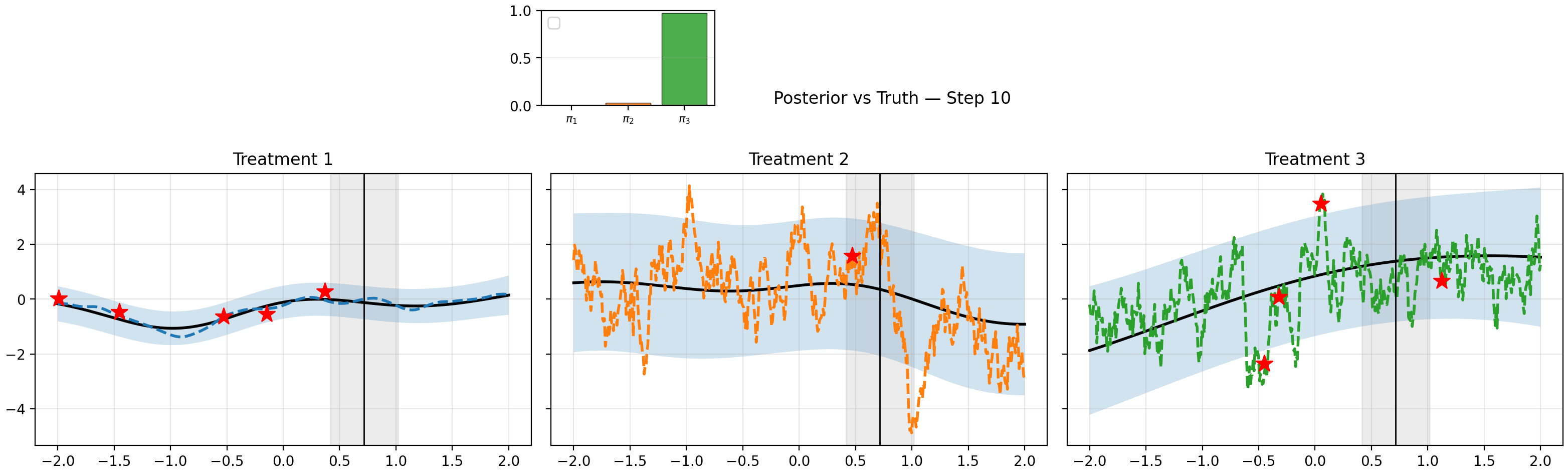}
    \caption{AR-DEIG: Acquisition step 10 (red star indicates the query point and black vertical line is the new point the decision-maker has to make a decision).}
    \label{fig:acq_ardeig_10}
\end{figure*}
\end{document}